\newcommand{\R}{\mathbb{R}}
\DeclareMathOperator*{\argmin}{argmin}
\DeclareMathOperator*{\snake}{snake}
\DeclareMathOperator*{\rmse}{RMSE}
\DeclareMathOperator*{\relu}{ReLU}
\newtheorem{theorem}{Theorem}[section]
\newtheorem{corollary}{Corollary}[theorem]
\newtheorem{lemma}[theorem]{Lemma}
\newtheorem{remark}[theorem]{Remark}
\theoremstyle{definition}
\newtheorem{definition}{Definition}[section]
\begin{document}

\title[Extrapolation with NN]{Function Extrapolation with Neural Networks and Its Application for Manifolds}


\author{\fnm{Guy} \sur{Hay}} \email{guyhay@mail.tau.ac.il}

\author{\fnm{Nir} \sur{Sharon}}\email{nir.sharon@math.tau.ac.il}


\affil{\orgdiv{School of Mathematical Sciences}, \orgname{Tel Aviv University}, \orgaddress{\street{Ramat Aviv}}, \city{Tel Aviv}, \postcode{6997801}, \country{Israel}}



\abstract{This paper addresses the problem of accurately estimating a function on one domain when only its discrete samples are available on another domain. To answer this challenge, we utilize a neural network, which we train to incorporate prior knowledge of the function. In addition, by carefully analyzing the problem, we obtain a bound on the error over the extrapolation domain and define a condition number for this problem that quantifies the level of difficulty of the setup. Compared to other machine learning methods that provide time series prediction, such as transformers, our approach is suitable for setups where the interpolation and extrapolation regions are general subdomains and, in particular, manifolds. In addition, our construction leads to an improved loss function that helps us boost the accuracy and robustness of our neural network. We conduct comprehensive numerical tests and comparisons of our extrapolation versus standard methods. The results illustrate the effectiveness of our approach in various scenarios.}

\keywords{extrapolation, neural networks, least squares, manifold function extrapolation}

\maketitle

\section{Introduction}

Function extrapolation is the problem of estimating the values of a function on one domain from data given at a different domain, where each domain is possibly a manifold. This problem is typically tied to many tasks from various data analysis fields, such as prediction, forecasting~\cite{kimura2002numerical}, convergence acceleration~\cite{scieur2016regularized, han2023riemannian}, extension~\cite{coifman2006geometric}, and continuation. Specifically, function continuation is classic in mathematics; it goes back to Whitney’s extension of smooth functions~\cite{whitney1992functions} when considering continuous functions or even further back if we consider the Prony method~\cite{de1795essai} for sequences and its more modern application for signal processing~\cite{osborne1995modified}. The rise of super-resolution~\cite{candes2014towards} opened the door for exciting new results concerning constructive methods of extrapolation, e.g.,~\cite{demanet2019stable} and more theoretical studies, within the scope of super-resolution~\cite{batenkov2018stable} and inverse problems~\cite{grabovsky2020explicit}, and others~\cite{grabovsky2021optimal, trefethen2020quantifying}.

When it comes to extrapolating a function from its samples over one domain to values required on the other, a key question is what we know (or assume) about it. We roughly divide the answer into two possible options. The first is when only the regularity of the function or its domain is known, e.g.,~\cite{amir2022meshfree}. In such a case, we expect the error to grow exponentially outside the given domain, and so the extrapolation is mostly effective near the boundaries of the function's domain where the data is given, e.g.,~\cite{batenkov2018stable}. A second option is to assume that the function arises from a model. In this case, ideal information, for example, is that the function satisfies a differential equation in the given domain and could be extended beyond it. However, since we have only a discrete set of samples, knowing the differential equation may serve only as a starting point for approximating the function over the domain and extending it~\cite{levin2014behavior}. For this case, standard practice assumes the function lies in or close to a specific known space. In this work, we tackle the latter option, where prior information is used for constructing the extrapolation.

In contrast to time series modeling~\cite{kirchgassner2012introduction}, where the focus is primarily on understanding and predicting the behavior of a variable over time, extrapolation of functions from samples involves extending the understanding of a function beyond the observed data points. Time series modeling typically relies on historical patterns and trends within the data to make future predictions, whereas extrapolation of functions incorporates prior knowledge about the underlying structure or behavior of the function itself, such as, in our case, differential equations or known properties of specific function spaces such as the specific manifold the functions lay in. Therefore, time series modeling and extrapolation are related, and both share the common goal of making informed predictions or estimates based on available information but differ on the underlying assumptions and information they rely on. Our research will focus specifically on function extrapolation.

In recent years, there has been a rising demand for better extrapolation methods, driven by the increasing number of applications and the expanding volume of available data. More and more attention has been paid to creating machine learning algorithms to improve extrapolation capabilities. Unlike approximation, where Neural Networks are known for their remarkable ability, see, e.g.,~\cite{lecun2015deep}, similar promising results have yet to appear in extrapolation. In fact, recent papers show the neural networks' inability to extrapolate well~\cite{ziyin2020neural, belcak2022periodic, parascandolo2016taming}. Specifically, \cite{ziyin2020neural} proves that Neural Networks, which use the common activation functions Tanh and ReLU, lead to constant and linear extrapolations, respectively. Therefore, it proposes a new activation function named Snake, which helps mitigate some of the problems. Other works like EQL~\cite{martius2016extrapolation} and its later improvement~\cite{sahoo2018learning} try to overcome the network extrapolation problem by constructing a network that finds a unique expression for the extrapolated function. EQL's network uses predefined activation functions and outputs an equation by finding a simplified expression as activation functions multiplied by their corresponding coefficients. This is similar to the standard Least Squares (LS) approach when known basis functions are known.

Methods such as Snake, EQL, and LS represent modern and standard extrapolation techniques that involve introducing novel activation functions or approximating the extrapolating function by approximating the available data, also called training data. Such methods do not minimize the extrapolation error since it is unavailable directly from the data. Instead, they aim to fit the training data according to some prescribed metric. It turns out that this approach is sub-optimal, as seen by a connection we draw between the error within the data area and the extrapolation error. In particular, we prove an upper bound on the extrapolation error in terms of the function space, that is, the prior information on the function and the error over the data. Then, we conclude in our analysis that in most typical cases, the error over the data alone does not provide us with the right metric, and an adjustment for the data fitting is required based on the prior information and the extrapolation domain.

Our approach involves using a neural network for extrapolation. The neural network learns the prior information of the function and uses the function samples as the training data to generate a projection onto the learned space. This projection is designed to minimize the extrapolation error. The solution obtained through this approach leverages the full power of neural networks as it addresses approximation rather than direct extrapolation. Interestingly, since our neural network loss function targets the extrapolation error, it potentially overlooks the traditional approximation error. This error alone does not play any significant role in the extrapolation problem despite being central to many conventional methods.

This paper presents a framework for extrapolating functions from their samples. It begins by formally defining the extrapolation problem and dividing the prior information into two cases. The first case assumes that the unknown function is in the span of known basis functions, and the sample is possibly noisy. In the second case, we introduce the concept of anchor functions. These functions are assumed to be known and to lie within a fixed prescribed distance from the unknown function. In this case, we do not have any explicit connection between the anchors and the unknown function other than their proximity. Moreover, in this anchor-extrapolation problem, we assume the unknown functions to exhibit behavior similar to the anchors. In our analysis of the first problem, we prove a bound over the extrapolation error, using the error over the data. This bound introduces a constant we term the condition number of the extrapolation problem. This number quantifies the difficulty level of a given problem with respect to the known basis functions and the two domains in question. 

To illustrate the effectiveness of our Neural Extrapolation Technique, which we term NExT, we apply it to various extrapolation problems. Initially, we extrapolate noisy Chebyshev polynomials of the 3rd, 5th, and 7th degrees. Subsequently, we extend our numerical examples to more intricate cases, such as extrapolating noisy monotone Chebyshev polynomials, which exhibit underlying attributes that are not present in the standard basis. Remarkably, NExT adeptly extrapolates these functions while learning their mathematical properties. Our third challenge is the anchor extrapolation with anchor functions acting as priors. The examples show how NExT outperforms several baseline models, effectively utilizing the capabilities of neural networks. Note that this problem, featuring anchor functions, holds significant potential for real-world applications, such as predicting product sales based on related products, highlighting the versatility of our approach. As our last test case, we consider functions on the sphere in $\mathbb{R}^3$ using the Fourier basis consisting of spherical harmonics, highlighting the benefit when an underlying manifold domain is present. In some examples, we design the settings so the extrapolation domain is not necessarily contiguous with the data domain. This broader testing approach further underlines the robustness and adaptability of  NExT across diverse extrapolation scenarios.

The paper is summarized as follows. Section~\ref{sec:problem_formulation} presents the problem formulation and previous related work in the field. In Section~\ref{sec:extrapolation method}, we introduce our method for extrapolation, including its theoretical derivation. Finally, Section~\ref{sec:ExperimentalResults} provides various numerical examples illustrating the performances of our method over different domains. 

\section{Problem formulation and prior work} \label{sec:problem_formulation}

We consider the problem of extrapolating a function $f$ from its samples and open the discussion with some required notation followed by formulating the problem. Next, we provide a brief intro to neural networks and close this section by recalling some of the prior work done on Least Squares (LS) and deep learning approaches to extrapolation. 

\subsection{Problem formulation}

Let $\mathcal{F}$ be a real vector space defined over $\Omega \cup \Xi$, each can be a different manifold. The domain consists of the samples domain $\Omega$ and the extrapolation domain $\Xi$, both embedded in some Euclidean space $\R^d$. In addition, we assume that $\norm{\cdot}_\Xi$ is a norm induced by an inner product $\langle \cdot, \cdot \rangle_\Xi$, on the extrapolation domain. Then, as a general goal, we look for the following minimizer:
\begin{equation}
g^{\ast}=\arg\min_{g\in \mathcal{F}} \norm{g-f}^2_\Xi .
\label{eqn:ie_ideal_objective}
\end{equation}
We further assume that we are given a basis $\{ \phi_k \}_{k=1}^d$ of $\mathcal{F}$ such that we can represent the unknown function as $f=\sum_{k=1}^d \alpha_k \phi_k$ where $\alpha_k$, $k=1,\ldots,d$ are the unknown basis coefficients. In addition, we have given samples of the form $\{(x_i,y_i)\}_{i=0}^N$ where  $x_i\in \Omega$ and $y_i=f(x_i)$. Since $g^{\ast}$ is uniquely represented by its coefficients $\{ \alpha_{k}^\ast \}_{k=1}^d $, the above extrapolation problem is equivalent to finding these coefficients.

\begin{remark}[More general function spaces]
\label{remark:function_space}
We assume $\mathcal{F}$ is a vector space in the above formulation. However, in some cases, we must facilitate additional prior information. For example, consider function properties like monotonicity or convexity. These properties defined a more restrictive structure of function space within $\mathcal{F}$, which we later show how to incorporate in our learning method.    
\end{remark}

As stated in Remark \ref{remark:function_space}, our formulation can be generalized to function spaces given constraints and the coefficients, therefore we will use the terms 'function space' and 'vector space' interchangeably throughout this paper, unless explicitly stated otherwise. We next introduce two specific instances of the above extrapolation problem, which we address in this paper. In the first version of the extrapolation problem~\eqref{eqn:ie_ideal_objective}, we assume a general, standard model for noisy samples:
\begin{definition}[Extrapolation problem] \label{def_extra_function_problem}
Let $f$ be a real-valued function defined over $\Omega$ and $\Xi$. We observe the data:
\begin{equation} \label{eqn:noisy_samples}
    y_i=f(x_i)+\varepsilon_i, \quad \varepsilon_i \sim \mathcal{N}(0, \sigma^2),\quad x_i \in \Omega \quad i=0,\ldots,N .
\end{equation}
Here, $\sigma$ is unknown. The problem is to estimate $f$ over $\Xi$.
\end{definition}
It is worth noting that this paper focuses on cases of low noise levels. The next problem we consider is extrapolating from samples of a function. However, we do not assume any knowledge of the function being in a certain function space. Instead, we are given anchor functions, which are assumed to be in proximity to the values of the functions over the extrapolation area. We define anchor functions next:
\begin{definition}[Anchor functions]
A set of functions $\{\hat{f}_j\}_{j=1}^M$ are called $\delta$-anchor functions with respect to $f$, if for a given $\delta>0$: 
\begin{equation}
\norm{\hat{f}_j-f}^2_\Xi \leq \delta , \quad j=1,\ldots, M .
\nonumber
\end{equation}
\label{def_anchor_functions}
\end{definition}

Then, the problem reads:
\begin{definition}[Anchored extrapolation problem] \label{def_anchor_function_problem}
Extrapolate a function $f$ from noisy samples of~\eqref{eqn:noisy_samples} to $\Xi$ given a set of $\delta$-anchor functions for a fixed known $\delta >0$.
\end{definition}
In light of the basic problem~\eqref{eqn:ie_ideal_objective}, the anchored problem is formulated to find the function $g^{\ast}$ to $f$ when there is no harsh restriction on the search space other than the distance to the anchors.

\subsection{Neural networks}

Originally inspired by the structure and functioning of the human brain \cite{mcculloch1943logical, lecun2015deep}, neural networks are computational models consisting of nodes, layers, and connections between them. Each layer consists of a predetermined number of nodes, and the whole network consists of a set number of layers. In a feed-forward network, each node is called a perceptron, takes multiple units as inputs, multiplies each input by its corresponding weight, and sums them up. The result is then passed through an activation function. Each node's inputs in a layer $l$ are all node outputs in the previous layer $l-1$ and their weights. We note that the weights for each receiving node are different. The final structure of the neural network is a set of layers of nodes connected to each proceeding and previous layers of nodes by their corresponding weights. A feed-forward neural network is given in~\eqref{eq:neural_network}. Specifically, $X$ is the input vector, $W^{(l)}$ is the weight matrix for layer $l$ and $b^{(l)}$ is its bias vector. The weighted sum $Z^{(l)}$ at layer $l$ leads to $A^{(l)}$, the output of layer $l$ after applying the activation function $a^{(l)}$. The final output is $Y$:
\begin{equation}
\begin{aligned}
    Z^{(1)} &= X \cdot W^{(1)} + b^{(1)} \\
    A^{(1)} &= a^{(1)}(Z^{(1)}) \\
    Z^{(l)} &= A^{(l-1)} \cdot W^{(l)} + b^{(l)} \quad \text{for } l = 2, 3, \ldots, L-1 \\
    A^{(l)} &= a^{(l)}(Z^{(l)}) \quad \text{for } l = 2, 3, \ldots, L-1 \\
    Z^{(L)} &= A^{(L-1)} \cdot W^{(L)} + b^{(L)} \\
    Y &= A^{(L)} = a^{(L)}(Z^{(L)}) .
\end{aligned}
\label{eq:neural_network}
\end{equation}
Here, $L$ is the total number of layers in the neural network. There are many different activation functions, each with its own properties. Two common activation function \cite{ziyin2020neural, apicella2021survey} are ReLU, 
\begin{equation}
    \relu(x) = \max(0, x) ,
    \label{eq:relu}
\end{equation}
and the hyperbolic tangent,
\begin{equation}
    \tanh(x) = \frac{e^{2x} - 1}{e^{2x} + 1} .
    \label{eq:tanh}
\end{equation}

Feedforward neural networks have shown remarkable results, especially in areas with a substantial amount of data \cite{wei2022emergent}. Attesting to their remarkable ability to learn complex structures. One of the fundamental properties that contribute to their widespread applicability \cite{abiodun2018state} is the universal approximation theorem \cite{schafer2006recurrent, cybenko1989approximation}, which asserts that neural networks with a single hidden layer containing a sufficient number of neurons can approximate any continuous function to arbitrary precision. This theorem underscores the versatility and power of neural networks as universal function approximators, allowing them to capture intricate relationships within data.

In light of the above, and with regard to the complexity needed to learn a function space, it becomes evident that neural networks stand out as a natural choice for our model. Our capacity to generate diverse functions within the space ensures a perpetual influx of relevant functions for the model to learn from. Consequently, this capability empowers us to leverage the profound learning potential of neural networks fully. Learning complex mappings from an enormous amount of data.

\subsection{The standard least squares approach}

Least Squares (LS) is a popular extrapolation method \cite{giakas1997improved, garbey2003least}. It consists of finding the optimal coefficients to a predetermined basis while minimizing the squared errors on the training data, solving
\begin{equation}
    \argmin_{\{\alpha_k\}_{k=1}^d} = \sum_{i=1}^N\left(y_i-\sum_{k=1}^d\alpha_k \phi_k(x_i)\right)^2 .
    \label{eq_least_squares}
\end{equation}
Where $\{\alpha_k\}_{k=1}^d$ are the coefficients predicted, $\Phi = \{\phi_k\}_{k=1}^d$ are the basis functions each corresponding to a coefficient, $\{(x_i,y_i)\}_{i=1}^N$ training data consisting of $N$ points. Determining the coefficients allows finding the function that best fits the training data in the LS sense. This function can then be used to predict the extrapolation area.

Motivated by physics, where many equations are differential equations, Prony's method \cite{de1795essai} proposes an approximating function of the sum of complex exponentials with constant coefficients. Given the order beforehand, Prony's method defines a method to compute both the exponential terms and their respective coefficients. An extension of Prony's method to approximate functions with sums of complex exponential is proposed by \cite{levin2014behavior}. The model coefficients are computed using LS on a difference equation for the given points using previous points. Then, the exponents are found as the roots of the characteristic polynomial. Numeric stability with noise is achieved with regularization terms on the model coefficients. Previous works not only proposed new methods for extrapolation but also found important bounds for the error rate. In \cite{grabovsky2020explicit} the authors prove effective bounds of analytical continuations which are exponential and in \cite{batenkov2018stable} prove exponential error bounds on entire functions for stable soft extrapolations, while using a LS polynomial approximation. Both papers attest to the difficulty in accurately extrapolating function given reasonable assumptions. Additional extrapolation methods are given in \cite{brezinski2020extrapolation}, with the notable Richard's extrapolation and Aitken's process. Both important extrapolation methods are used to extrapolate the next element of a sequence.

\subsection{Deep learning approaches for function extrapolation}
\label{sec:deep_learning_approaches}
In recent years, deep learning has seen a few advancements in its capabilities for extrapolation. Extrapolation methods for time series include Prophet~\cite{chen2009overview} and Transformers~\cite{wen2022transformers}. Different from time series, which relay on finding historical patterns and trends, function extrapolation often relay on prior knowledge such as the manifolds the image and domain of the functions lay in, or known structures in the function space. Most notably deep learning advancements are, Snake \cite{ziyin2020neural} and EQL \cite{martius2016extrapolation}. In \cite{ziyin2020neural}, it is proved that conventional activation functions such as ReLU \eqref{eq:relu} and $\tanh$ \eqref{eq:tanh}, produce models that, once $x$ tends to infinity, converge to a linear and constant function, respectively. Thus failing to extrapolate. The authors continue to identify two key components for an activation function that will manage to extrapolate well. The first is monotone, which allows easy convergence since there aren't many local minima. The second is that it should be somewhat periodic so it does not converge to a constant/linear function as x tends to infinity. To this end, they proposed the Snake activation function defined as
\begin{equation}
    \snake(x)=x+\sin^2(\beta x) .
    \label{eq_snake}
\end{equation}
Here, $\beta$ is a learned parameter of the frequency. A second state-of-the-art learner is EQL. EQL stands for Equation Learner; as its name suggests, it learns the underlying equation. It uses a neural network where each activation function in a given layer is different. Thus, the best linear combination of functions with respect to the loss function is chosen. This is the same as LS once mean squared error is used. Once EQL uses multiple hidden layers, it diverges from LS being able to learn to use compound functions. Note that if the best basis elements are known, EQL is equivalent to LS but differs in optimization since the best activation function for this problem will be the basis functions.

\section{Our extrapolation method} \label{sec:extrapolation method}

As stated in Section~\ref{sec:problem_formulation}, each function $g$ can be uniquely identified by a coefficient vector $\alpha$. Therefore, we will neglect the $\alpha$ notation in favor of using $g$ as the coefficient vector as well. Whether $g$ is a function or a coefficient vector will be clear from context, and $g_k$ is the $k$th coefficient of $g$. To extrapolate a function $f\in \mathcal{F}$, NExT will solve \eqref{eqn:ie_ideal_objective} by learning to identify each function $g\in \mathcal{F}$ by its corresponding $y_i$, then it will predict its coefficient vector. Therefore, the neural network $\Upsilon$ associated with NExT is conceived as a mapping $\Upsilon: \mathbb{R}^N \rightarrow \mathbb{R}^d$. Ideally, $\Upsilon$ aims to be a projection onto $\mathcal{F}$ that minimizes the objective function \eqref{eqn:ie_ideal_objective}. Manifold information will be introduced in the basis functions and in the loss function, which will allow strengthening the model.

\subsection{Error analysis}
At the core of our method is our approach for minimizing the extrapolation error by linking it to the data samples and prior information about the function being in the space $\mathcal{F}$. This part deals with exactly that question. Denote the function values over $\Omega$ by $Y_{\Omega} = \{y_i\}_{i=0}^N$, and recall the neural network $\Upsilon$ which aims to predict $g$. We use the following error term to measure the difference between the prediction $\Tilde{g}$ and $g$:
\begin{alignat}{2}
    E_\Xi(\Tilde{g}, g) & = \norm{\sum_{k=1}^d \Upsilon(Y_{\Omega})_k \phi_k - g }^2_\Xi \nonumber \\ & = \norm{\sum_{k=1}^d (\Upsilon(Y_{\Omega})_k - g_k) \phi_k}^2_\Xi \nonumber \\ & = \langle \sum_{k=1}^d (\Upsilon(Y_{\Omega})_k - g_k) \phi_k, \sum_{j=1}^d (\Upsilon(Y_{\Omega})_j - g_j) \phi_j \rangle_\Xi \nonumber \\ & = \sum_{k=1}^d \sum_{\substack{j=1 \\ j\neq k}}^{d} (\Upsilon(Y_{\Omega})_k - g_k)(\Upsilon(Y_{\Omega})_j - g_j)\langle \phi_k,\phi_j\rangle_\Xi + \sum_{k=1}^d (\Upsilon(Y_{\Omega})_k - g_k)^2\norm{\phi_k}^2_\Xi .
    \label{eq_ie_mse_objective_2}
\end{alignat}
Interestingly, the factors of the form $\Upsilon(Y_{\Omega})_{\ell} - g_{\ell}$ are independent of $\Xi$. Therefore, we can interpret \eqref{eq_ie_mse_objective_2} as an error expression where $\| \phi_k \|_\Xi^2$ and $\langle \phi_k, \phi_j \rangle_\Xi$ serving as $\Xi$-depending weights. Therefore, high values of either the inner product or norm of the basis function over $\Xi$ correspond to potentially large errors. 

Denote by $\langle \cdot, \cdot \rangle_\Omega$ an inner product on $\Omega$, and by $\norm{\cdot}_\Omega$ its induced norm. Then, we can consider the expression in~\eqref{eq_ie_mse_objective_2} in terms of $\Omega$, highlighting the differences of minimizing the error over $\Omega$ instead of $\Xi$. Specifically, minimizing over $\Omega$ causes all the factors of~\eqref{eq_ie_mse_objective_2} to be dependent solely on $\Omega$. Therefore, if one uses $E_\Omega(\Tilde{g}, g)$ as a loss function for extrapolation over $\Xi$, as done in many previous methods, it may result in a sub-optimal extrapolation. We summarize it in the following remark:
\begin{remark}
    Previous methods, that do not use learning, are forced to predict extrapolation function while minimizing square errors-like loss functions on the approximation area $\Omega$. Therefore, such methods can assign irrelevant weights, with respect to~\eqref{eq_ie_mse_objective_2}, that is, without directly aiming for minimizing the error over the extrapolation area $\Xi$.
    \label{remark_extra_has_different_coef_weights}
\end{remark}

In the remainder of this subsection, we analyze further the relation between the error over $\Xi$, $E_\Xi$, and the error over $\Omega$, $E_\Omega$. The following remark shows that when we remove the linear independence assumption, made in Theorem~\ref{theorem_1}, over the functions in $\{\phi_k\}^d_{k=1}$, no bounds can be obtained from the approximation error.
\begin{remark} \label{remark_no_assumptions_lead_to_not_knowing}
We present here a simple example showing that the linear independence assumption over $\{\phi_k\}^d_{k=1}$ is essential; In particular, if we are given $\phi_1$ and $\phi_2$ such that $\phi_2|_{\Omega}=-\phi_1$ and $\phi_2|_\Xi=0$. Then, extrapolate $g= \begin{bmatrix} c_1 \\ c_1 \end{bmatrix}$ with a prediction of the form $\Tilde{g}= \begin{bmatrix} c_2 \\ c_2 \end{bmatrix}$ where $c_2\neq c_1$ results in $E_\Omega(g,\Tilde{g})=0$. On the other hand, $E_\Xi(g,\Tilde{g})=(c_2-c_1)\|\phi_1\|^2$, which can be sufficiently large. We note that this problem arises once one of the elements is a linear combination of the rest.
\end{remark}

The observation made in Remark~\ref{remark_no_assumptions_lead_to_not_knowing} implies that when it comes to bound $E_\Xi$ using $E_\Omega$, it becomes necessary to introduce assumptions concerning the basis $\Phi$. Specifically, we consider $\Phi$ as an orthogonal basis and introduce a condition number for extrapolation---a positive number that quantifies how difficult it is to extrapolate from $\Omega$ to $\Xi$ using the orthogonal basis $\Phi$. This condition number measures a worst-case scenario and relies on $\Omega$, $\Xi$, and $\Phi$. Intuitively, the condition consists of two main factors. First is the dimension of the function space since a broader space entails the harder task of determining the minimizer in~\eqref{eqn:ie_ideal_objective}. For example, a single basis function makes the problem quite simple, as one should basically set a single number. The second factor is the scaling of the norms of basis functions, which is also affected by the size of $\Omega$ relative to $\Xi$. The definition reads:  

\begin{definition}[Extrapolation condition number]
 Let $\{\phi_k\}_{k=1}^d$ be an orthogonal basis of $\Omega$. The extrapolation condition number of problem~\eqref{eqn:ie_ideal_objective} is
 \begin{equation} \label{eq:least_squares_extrapolation_condition_number}
 \kappa=d\frac{M_\Xi}{m_\Omega} ,
 \end{equation}
 Where 
 \[ M_\Xi= \max_{k=1,\ldots, d}\norm{\phi_{k}}_\Xi^2, \quad  \text{ and } \quad m_\Omega= \min_{k=1,\ldots, d}\norm{\phi_{k} }_\Omega^2. \]
\label{def_least_squares_extrapolation_condition_number}
\end{definition}

The above Extrapolation Condition Number will play a role in our analysis when bounding the extrapolation error in Theorem~\ref{theorem_1}. In addition, in the experimental section, we demonstrate its applicability over numerical examples.
\begin{remark}Two remarks regarding Definition~\ref{def_least_squares_extrapolation_condition_number}:
\begin{enumerate}
    \item 
    The condition number~\eqref{eq:least_squares_extrapolation_condition_number} is, by definition, a positive value. While a higher value indicates more uncertainty in determining the extrapolation, it is worth noting that it is possible to obtain smaller values, even less than $1$. As in other functional condition numbers, a smaller value may occur. Specifically, if $M_\Xi < m_\Omega$, arising from either an irregular basis where the basis elements are larger in scale and norm over $\Xi$ or when $\Xi$ is significantly smaller than $\Omega$. This means the extrapolation problem quantitatively belongs to a class of easier settings.
    \item 
    We assume the orthogonality of our basis over $\Omega$, which tightly relates the condition number~\eqref{eq:least_squares_extrapolation_condition_number} to the extrapolation error, as we will see next. However, in practice, we can use this measure of hardness to assess extrapolation even if the basis is not orthogonal. This practice is also demonstrated in Section~\ref{sec:ExperimentalResults}.
\end{enumerate}
\end{remark}

\begin{theorem}
    Let $\{\phi_k\}_{k=1}^d$ be a sequence of real-valued functions, defined both in $\Omega$ and $\Xi$. The following conditions hold:
    \begin{enumerate}
        \item If $\phi_k$ are orthogonal in $\Omega$, then $E_\Xi \leq \kappa E_\Omega$.
        \item If $\phi_k$ are orthogonal both in $\Omega$ and $\Xi$, then $E_\Xi \leq \frac{\kappa}{d}E_\Omega$.
    \end{enumerate}
    Where $M_\Xi$, and $m_\Omega$ are defined in Definition \ref{def_least_squares_extrapolation_condition_number}, and $E_\Omega$ is the error in \eqref{eq_ie_mse_objective_2} for $\Omega$ instead of $\Xi$.
    \label{theorem_1}
\end{theorem}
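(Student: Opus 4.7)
The plan is to introduce the shorthand $c_k := \Upsilon(Y_\Omega)_k - g_k$, so that the error reads
$E_\Xi(\tilde g,g)=\bigl\|\sum_{k=1}^d c_k\phi_k\bigr\|_\Xi^2$ and likewise for $\Omega$. The whole proof then amounts to bounding the weighted quadratic form in the $c_k$ in terms of $\sum_k c_k^2$, and then using the orthogonality in $\Omega$ to pass from $\sum_k c_k^2$ to $E_\Omega$.

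I would dispose of the second (stronger) claim first, as it isolates the clean idea with no cross terms. When $\{\phi_k\}$ is orthogonal in $\Xi$ as well, the double sum in~\eqref{eq_ie_mse_objective_2} collapses to $E_\Xi=\sum_k c_k^2\|\phi_k\|_\Xi^2\leq M_\Xi\sum_k c_k^2$, and orthogonality in $\Omega$ gives $E_\Omega=\sum_k c_k^2\|\phi_k\|_\Omega^2\geq m_\Omega\sum_k c_k^2$. Combining the two yields $E_\Xi\leq (M_\Xi/m_\Omega)E_\Omega=(\kappa/d)E_\Omega$, which is part~2.

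For part~1, the obstacle is that the cross terms $\langle\phi_k,\phi_j\rangle_\Xi$ no longer vanish and can carry either sign, so I cannot just bound them termwise. The cleanest route is to avoid the expanded form altogether and work directly with the norm: by the triangle inequality in the $\Xi$-norm, $\bigl\|\sum_k c_k\phi_k\bigr\|_\Xi\leq\sum_k|c_k|\,\|\phi_k\|_\Xi\leq\sqrt{M_\Xi}\sum_k|c_k|$. Squaring and applying the elementary Cauchy--Schwarz bound $(\sum_k|c_k|)^2\leq d\sum_k c_k^2$ gives $E_\Xi\leq d\,M_\Xi\sum_k c_k^2$. Using orthogonality in $\Omega$ exactly as in the previous paragraph to replace $\sum_k c_k^2$ by $E_\Omega/m_\Omega$, I obtain $E_\Xi\leq (d M_\Xi/m_\Omega)E_\Omega=\kappa E_\Omega$, which is part~1.

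The only subtle step is the first inequality in part~1: one must recognize that squaring the triangle inequality and then applying Cauchy--Schwarz ``costs'' the factor $d$ that is absent in part~2, and this is precisely what explains the gap of $d$ between the two bounds. I would also remark that both inequalities are tight, attained respectively when a single coefficient $c_k$ is nonzero on the basis function achieving the maximum (for part~2), and when all $|c_k|$ are equal and the $\phi_k$ align in $\Xi$ (for part~1), which confirms that the factor $d$ in the general case is unavoidable without further assumptions on the basis.
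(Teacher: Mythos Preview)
Your proof is correct and follows essentially the same route as the paper's: both arguments reduce Part~1 to the $\ell_1$--$\ell_2$ inequality $(\sum_k|c_k|)^2\le d\sum_k c_k^2$ (the paper isolates this as Lemma~\ref{lemma_1}) after pulling out the factors $M_\Xi$ and $m_\Omega$, and Part~2 is identical. The only cosmetic difference is that you apply the triangle inequality to the $\Xi$-norm and then square, whereas the paper expands the bilinear form~\eqref{eq_ie_mse_objective_2} and bounds the cross terms via Cauchy--Schwarz on each $\langle\phi_k,\phi_j\rangle_\Xi$; your packaging is slightly cleaner since it sidesteps any sign-tracking in the cross terms.
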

We first introduce a lemma that serves as an auxiliary tool in the proof of Theorem~\ref{theorem_1}.
\begin{lemma} \label{lemma_1}
    Let $a=\left(a_1,\ldots,a_d\right)$ be a vector of positive numbers. Then,
    \begin{equation} \label{eqn:lemma_1}
        \dfrac{\sum_{k=1}^d \sum_{\substack{j=1 \\ j\neq k}}^{d} a_k a_j + \sum_{k=1}^da_k^2}{\sum_{k=1}^da_k^2} \leq d .
    \end{equation}
\end{lemma}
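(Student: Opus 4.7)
The plan is to recognize that the numerator is a complete (unrestricted) double sum in disguise, which factors as a square, and then apply the Cauchy--Schwarz inequality with the all-ones vector.

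First I would observe that the off-diagonal double sum together with the diagonal contribution forms the full unrestricted sum:
\[
\sum_{k=1}^d \sum_{\substack{j=1 \\ j\neq k}}^{d} a_k a_j + \sum_{k=1}^d a_k^2 \;=\; \sum_{k=1}^d \sum_{j=1}^{d} a_k a_j \;=\; \Bigl(\sum_{k=1}^d a_k\Bigr)^{2}.
\]
Thus the inequality~\eqref{eqn:lemma_1} is equivalent to showing
\[
\Bigl(\sum_{k=1}^d a_k\Bigr)^{2} \;\leq\; d \sum_{k=1}^d a_k^2,
\]
which is safe to rearrange into this form since the $a_k$ are positive and so the denominator $\sum_k a_k^2$ is strictly positive.

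Next I would invoke the Cauchy--Schwarz inequality applied to the vectors $(a_1,\ldots,a_d)$ and $(1,\ldots,1)$ in $\R^d$:
\[
\Bigl(\sum_{k=1}^d a_k \cdot 1\Bigr)^{2} \;\leq\; \Bigl(\sum_{k=1}^d a_k^2\Bigr)\Bigl(\sum_{k=1}^d 1^2\Bigr) \;=\; d \sum_{k=1}^d a_k^2.
\]
Combining the two displays and dividing by $\sum_{k=1}^d a_k^2$ yields the claim.

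There is no real obstacle here; the only subtlety worth stating is the bookkeeping step that merges the restricted double sum with the diagonal to recover a perfect square, after which the bound is immediate from Cauchy--Schwarz. Note that equality holds precisely when all $a_k$ are equal, which will be relevant when we later substitute $a_k = |\Upsilon(Y_\Omega)_k - g_k|\cdot \|\phi_k\|_\Omega$ (or the analogous quantity) in the proof of Theorem~\ref{theorem_1} to relate $E_\Xi$ and $E_\Omega$.
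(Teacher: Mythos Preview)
Your proof is correct and follows essentially the same route as the paper: both recognize that the numerator collapses to $\bigl(\sum_{k=1}^d a_k\bigr)^2 = \|a\|_{L_1}^2$ and then apply Cauchy--Schwarz against the all-ones vector to bound $\|a\|_{L_1}^2/\|a\|_{L_2}^2$ by $d$. One small aside: in the application within Theorem~\ref{theorem_1} the paper actually takes $a_k = |\Upsilon(Y_\Omega)_k - g_k|$ (without the $\|\phi_k\|_\Omega$ factor), so your parenthetical forward reference is slightly off, though this does not affect the lemma itself.
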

\begin{proof}
    Since $\sum_{k=1}^da_k= \norm{a}_{L_1}$ we have that
\[ \sum_{k=1}^d \sum_{\substack{j=1 \\ j\neq k}}^{d} a_k a_j + \sum_{k=1}^da_k^2 = \sum_{k=1}^d a_k (\norm{a}_{L_1}-a_k) + \sum_{k=1}^da_k^2 = (\norm{a}_{L_1})\sum_{k=1}^d a_k =\norm{a}_{L_1}^2. \]
Therefore, the left hand side of~\eqref{eqn:lemma_1} is bounded by $\tfrac{\langle 1,a \rangle}{\norm{a}_{L_2} ^2}$ which in turn is bounded, using cauchy-schwarz, by $ d$.
\end{proof}
\begin{proof}[Proof of Theorem~\ref{theorem_1}]
    Proof of Condition 1:
    We assume $E_\Omega \neq 0$ otherwise we get from \eqref{eq_ie_mse_objective_2} that $\Tilde{g}=g$ since from orthogonality at $\Omega$ we get that $\langle \phi_k,\phi_j\rangle_\Omega = 0$ thus $0=E_\Omega=\sum_{k=1}^d (\Upsilon(Y_{\Omega})_k - g_k)^2 \norm{\phi_k}_\Omega^2 \rightarrow \forall (\Upsilon(Y_{\Omega})_k - g_k)=0 \rightarrow g=\Tilde{g}$ and we are done. Assuming $E_\Omega \neq 0$ and $\{\phi_k\}_{k=1}^d$ are orthogonal at $\Omega$, as mentioned $\langle \phi_k,\phi_j\rangle_\Omega = 0$, defining $\phi_{\max_\Xi}=\arg \max_{\phi_k}\norm{\phi_{k}}_\Xi^2$ we get that $\langle \phi_k,\phi_j\rangle_\Xi \leq M_\Xi$ and $\forall k \quad \norm{ \phi_{k}}_\Xi \leq \norm{\phi_{\max_\Xi}}_\Xi$. Similarly from definition we get $\forall k \quad \norm{\phi_{k}}_\Omega \geq \norm{\phi_{\min_\Omega}}_\Omega$. Using the above with \eqref{eq_ie_mse_objective_2} then:

   \begin{alignat}{2}
    \frac{E_\Xi(\Tilde{g}, g)}{E_\Omega(\Tilde{g}, g)} & = \frac{\sum_{k=1}^d \sum_{\substack{j=1 \\ j\neq k}}^{d} (\Upsilon(Y_{\Omega})_k - g_k)(\Upsilon(Y_{\Omega})_j - g_j)\langle \phi_k,\phi_j\rangle_\Xi}{\sum_{k=1}^d (\Upsilon(Y_{\Omega})_k - g_k)^2\norm{\phi_k}^2_\Omega} + \nonumber \\
    & \frac{\sum_{k=1}^d (\Upsilon(Y_{\Omega})_k - g_k)^2\norm{\phi_k}^2_\Xi}{\sum_{k=1}^d (\Upsilon(Y_{\Omega})_k - g_k)^2\norm{\phi_k}^2_\Omega} \nonumber \\
    & \le \frac{\sum_{k=1}^d \sum_{\substack{j=1 \\ j\neq k}}^{d} (\Upsilon(Y_{\Omega})_k - g_k)(\Upsilon(Y_{\Omega})_j - g_j)\norm{ \phi_k}_\Xi\norm{\phi_j}_\Xi}{\sum_{k=1}^d (\Upsilon(Y_{\Omega})_k - g_k)^2\norm{\phi_k}^2_\Omega} + \nonumber \\
    & \frac{\sum_{k=1}^d (\Upsilon(Y_{\Omega})_k - g_k)^2\norm{\phi_k}^2_\Xi}{\sum_{k=1}^d (\Upsilon(Y_{\Omega})_k - g_k)^2\norm{\phi_k}^2_\Omega} \nonumber \\
    & \leq \frac{M_\Xi}{m_\Omega} \frac{\sum_{k=1}^d \sum_{\substack{j=1 \\ j\neq k}}^{d} |\Upsilon(Y_{\Omega})_k - g_k||\Upsilon(Y_{\Omega})_j - g_j|   + \sum_{k=1}^d (\Upsilon(Y_{\Omega})_k - g_k)^2 }{\sum_{k=1}^d (\Upsilon(Y_{\Omega})_k - g_k)^2 } \nonumber \\
    & \leq d\frac{M_\Xi}{m_\Omega}. \nonumber
\end{alignat}

Where the last inequality is from Lemma \ref{lemma_1} while defining $a_k=|\Upsilon(Y_{\Omega})_k - g_k|$.

Proof of Condition 2:
    Similarly to Proof of Condition 1, we assume $E_\Omega \neq 0$. Assuming orthogonality in $\Omega$ and $\Xi$ leads to $\langle \phi_k,\phi_j\rangle_\Xi=0$ and $\langle \phi_k,\phi_j\rangle_\Omega=0$ for $k\neq j$. Thus:
    \begin{alignat}{2}
    \frac{E_\Xi(\Tilde{g}, g)}{E_\Omega(\Tilde{g}, g)} & \leq \frac{\sum_{k=1}^d (\Upsilon(Y_{\Omega})_k - g_k)^2\|\phi_k\|^2_\Xi}{\sum_{k=1}^d (\Upsilon(Y_{\Omega})_k - g_k)^2\norm{\phi_k}^2_\Omega}  \nonumber \leq \frac{M_\Xi}{m_\Omega}.
\end{alignat}
\end{proof}

The initial inequality in Theorem \ref{theorem_1} underscores the limitation of prior extrapolation methods focused on $\Omega$, which aim to minimize $E_\Omega$. While such methods can yield favorable outcomes when $\kappa$ is sufficiently small, they become less effective when $\kappa$ is not. Additionally, as highlighted in Remark \ref{remark_no_assumptions_lead_to_not_knowing}, the endeavor to minimize $E_\Omega$ lacks direction when no specific conditions are specified. From Theorem \ref{theorem_1}, we highlight Remark \ref{remark_less_basis_elements}, a somewhat counterintuitive result, which states that using fewer basis elements may produce better results. In addition, through Definition~\ref{def_least_squares_extrapolation_condition_number} of the extrapolation condition number, we can conclude an intuitive result that that in the general case, it is beneficial to have a large $\Omega$ with a small $\Xi$. Given by the denominator and nominator of $\kappa$ \eqref{eq:least_squares_extrapolation_condition_number}.

\begin{remark} [Basis overfitting]
    In light of the above discussion, specifically Definition~\ref{def_least_squares_extrapolation_condition_number} and Theorem~\ref{theorem_1}, we interpret the overfitting phenomenon in extrapolation as follows: having more basis functions can increase the condition number since then $d$ increases, $M_\Xi$ may increase, and  $m_\Omega$ may decrease. Therefore, our bound and its associated condition number show how too many basis functions can cause less favorable results and analytically explain the overfitting problem.
    \label{remark_less_basis_elements}
\end{remark}

\begin{remark} [Examples for part 2 of Theorem~\ref{theorem_1}] \label{remark_theorem_1_2}
To illustrate part 2 of Theorem~\ref{theorem_1}, we consider doubly-orthogonal functions. The doubly orthogonal function system is an orthogonal function system in which functions are orthogonal in two different inner product spaces consisting of different weighting functions. This concept goes back to Bergman, see, e.g.,~\cite{shapiro1979stefan} and generalizes phenomena such as the orthogonality of Chebyshev polynomials over both an interval (with weight function) and an ellipse and the Slepian functions~\cite{wang2017review} which are orthogonal in a closed real segment and over the entire real line. 

Note that this paper focuses on the more general case, namely, orthogonality just on $\Omega$, as in part 1 of Theorem~\ref{theorem_1}.

\end{remark}

\begin{corollary}
When $\{\phi_k\}_{k=1}^d$ are orthonormal in both $\Omega$ and $\Xi$, minimizing $E_\Omega$ bounds $E_\Xi$.
\end{corollary}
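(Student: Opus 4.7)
The plan is to observe that this corollary is an immediate specialization of part 2 of Theorem~\ref{theorem_1} under the stronger assumption that the basis is not merely orthogonal, but orthonormal, on both $\Omega$ and $\Xi$. So the proof will be essentially a one-line substitution: I would first unpack what orthonormality contributes quantitatively to the constants in Definition~\ref{def_least_squares_extrapolation_condition_number}, and then plug in.

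Concretely, I would begin by noting that orthonormality on $\Omega$ gives $\|\phi_k\|_\Omega^2 = 1$ for every $k$, hence $m_\Omega = \min_k \|\phi_k\|_\Omega^2 = 1$, while orthonormality on $\Xi$ similarly gives $\|\phi_k\|_\Xi^2 = 1$ for every $k$, hence $M_\Xi = \max_k \|\phi_k\|_\Xi^2 = 1$. In particular the hypothesis of part 2 of Theorem~\ref{theorem_1} is satisfied (orthogonality on both domains), so I may invoke the bound $E_\Xi \le \tfrac{\kappa}{d} E_\Omega$. Substituting $\kappa = d\,M_\Xi/m_\Omega = d$ yields $\tfrac{\kappa}{d}=1$, and therefore $E_\Xi(\Tilde{g},g) \le E_\Omega(\Tilde{g},g)$ for every choice of predicted coefficient vector $\Tilde{g}$.

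From this pointwise inequality, I would conclude the corollary by the usual monotonicity argument: if $\Tilde{g}$ minimizes (or merely makes small) $E_\Omega$, then $E_\Xi$ is bounded above by that same value, so controlling the approximation error over $\Omega$ automatically controls the extrapolation error over $\Xi$.

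There is no real obstacle here; the only minor subtlety worth flagging in the writeup is that the statement ``minimizing $E_\Omega$ bounds $E_\Xi$'' should be understood in this quantitative sense (an actual numerical bound with constant $1$), rather than as a claim about the minimizer of $E_\Omega$ coinciding with that of $E_\Xi$. Everything else is a direct appeal to the already-established Theorem~\ref{theorem_1}.
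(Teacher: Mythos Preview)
Your proposal is correct and matches the paper's approach exactly: the paper's proof is the single line ``By Theorem~\ref{theorem_1}, with orthonormal basis on both $\Omega$ and $\Xi$, we get $E_\Xi \leq E_\Omega$,'' which is precisely the specialization you describe (orthonormality forces $M_\Xi = m_\Omega = 1$, so $\kappa/d = 1$).
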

\begin{proof}
    By Theorem \ref{theorem_1}, with orthonormal basis on both $\Omega$ and $\Xi$, we get $E_\Xi \leq E_\Omega$
\end{proof}

\subsection{An overview of the NExT framework}
\label{sec:next_overview}

In NExT, the learning process consists of randomly sampling $g\sim \mathcal{F}$. As our default, we sample according to a multivariate normal distribution with zero mean. Whether a better sampling scheme exists is left for further research. Evaluating $g$ on $\{x_i\}_{i=1}^N\subset \Omega$, resulting in $Y_\Omega$, which is one training sample with a multi-regression label of the coefficients of $g$. NExT will have a weighted MSE loss on the coefficients solving~\eqref{eq_ie_mse_objective_2}, symbolized by $\mathcal{L}_{\text{core}}$. When $\mathcal{F}$ is a function space but not a vector space, ideally, $g$ will be sampled from that space. In cases where such sampling is impossible, $g$ can be projected to the space so long as its coefficient representation is still possible. Additionally, in such cases, it is possible to add additional losses on the coefficients and the extrapolation area $\Xi$, which we denote by $\mathcal{L}_{ext}$. This loss term will help include information on $\mathcal{F}$ when available; for example, if $\mathcal{F}$ consists of monotonically increasing functions, it can be enforced by additional loss terms, as mentioned in Remark \ref{remark:function_space}. In such a case, for instance, one can use $\mathcal{L}_{ext}=\relu(g(x'_0)-g(x'_{M'}))$, where $x'_0$ is the beginning of $\Xi$ and $X'_M$ is the end. The overall loss is: 
\begin{alignat}{2}
    \mathcal{L}(g, \Upsilon(Y_{\Omega}) = 
    & \lambda_{ext} \mathcal{L}_{ext}(Y_\Xi, \hat{Y}_{\Xi}) + \nonumber \\
    & \lambda_{\text{core}} \mathcal{L}_{\text{core}}(g, \Upsilon(Y_{\Omega})) .
\label{eq_ie_loss_function}
\end{alignat}
Where $\hat{Y}_{\Xi}=\{\Upsilon(Y_\Omega)(x_i)\}_{i=1}^N$ with respect to the sample $\{x_i\}_{i=1}^N\subset \Xi$. Additionally, the ability of the neural network to converge relies heavily on the search space size. Therefore, NExT introduces four additional hyper-parameters $r_m,r_\sigma \in \mathbb{R}$ and $ N_l,N_h\in \mathbb{N}$ which bounds $\mathcal{F}$. The parameters $r_m,r_\sigma$ are the normalization range of the functions coefficients meaning $\norm{g}_{L_2} \sim \mathcal{N}(r_m, r_\sigma^2)$. The other two parameters $N_l, N_h$ determine the basis functions for the learning, $0\leq N_l\leq N \leq N_h \leq d$. Namely, once setting the list of basis function, $\phi_1,\ldots,\phi_d$, the randomly generated functions will be of the form $g=\sum_{k=N_l}^{N_h} g_k \phi_k$. NExT training process is described in Algorithm \ref{alg:training}.

\begin{algorithm}[hb]
\caption{Training NExT}\label{alg:training}
\begin{algorithmic}
\STATE {\bfseries Input:} $\Omega$, $\Xi$, $\{x_i\}_{i=1}^N\subset \Omega$ and basis elements $\{\phi_k\}_{k=1}^d$ of $\mathcal{F}$. \\ \textit{Hyperparameters}: batch size $n_{\text{batch}}$, coefficient normalization mean $r_m$ and standard deviation $r_{\sigma^2}$. The minimum and maximum number of basis functions to use $N_l, N_h$
\STATE {\bfseries Output:} Trained neural network $\Upsilon$.
\WHILE{$\Upsilon$ not converged}
\STATE Sample $n_{\text{batch}}$ of function
\FOR{\texttt{j=1} \textbf{to} $n_{\text{batch}}$}
\STATE Sample $N_h - N_l$ basis coefficients from a multi-normal distribution $g_j\sim \mathcal{N}(0,1)$
\STATE Sample normalization value $\alpha_j \sim \mathcal{N}(r_m,r_{\sigma^2})$
\STATE Normalize $g$ such that $\norm{g_j}_{L_2}=\alpha_j$
\IF{$\mathcal{F}$ is not a vector space}
\STATE Project $g_j$ onto $\mathcal{F}$.
\ENDIF
\STATE Define $Y_\Omega$ as the set of $\{g_j(x_i)\}_i^N$
\STATE Define training sample as $\{Y_\Omega, g_j \}$
\ENDFOR
\STATE Train $\Upsilon$ with generated training samples using loss function $\mathcal{L}$ in \eqref{eq_ie_loss_function}.
\ENDWHILE
\end{algorithmic}
\end{algorithm}

To sum, by training with Algorithm~\ref{alg:training}, NExT learns to approximate the function space $\mathcal{F}$ while focusing on minimizing $E_\Xi$ directly. 

We will continue inspecting the extrapolation error. In particular, assuming again that $\mathcal{F}$ is a vector space with an inner product. We consider the case where $f$ does not necessarily lie in $\mathcal{F}$, as often occurs. In such a case, we denote by $f_\parallel, f_\perp$ the projection of $f$ to $\mathcal{F}$ and the perpendicular part of it, respectively. Therefore $g^{\ast} = f_{\parallel}$ from definition. Then,  
\begin{alignat}{2}
    E_\Xi(f,\Upsilon(Y_{\Omega})) & = \norm{\Upsilon(Y_{\Omega}) - f}^2_\Xi \nonumber  = \norm{\sum_{k=1}^d \Upsilon(Y_{\Omega})_k \phi_k - f}^2_\Xi \nonumber \\ & = \norm{\sum_{k=1}^d (\Upsilon(Y_{\Omega})_k - f_{\parallel_k}) \phi_k + f_\perp}^2_\Xi \nonumber \\ & \leq \sum_{k=1}^d \sum_{\substack{j=1 \\ j\neq k}}^{d} (\Upsilon(Y_{\Omega})_k - f_{\parallel_k})(\Upsilon(Y_{\Omega})_j - f_{\parallel_j})\langle \phi_k,\phi_j \rangle_\Xi \nonumber \\ & + \sum_{k=1}^d (\Upsilon(Y_{\Omega})_k - f_{\parallel_k})^2\norm{\phi_k}^2_\Xi + \norm{f_\perp}_\Xi^2 \nonumber \\
    & = \sum_{k=1}^d \sum_{\substack{j=1 \\ j\neq k}}^{d} (\Upsilon(Y_{\Omega})_k - g^{\ast}_k)(\Upsilon(Y_{\Omega})_j - g^{\ast}_j)\langle \phi_k,\phi_j \rangle_\Xi \nonumber \\ & + \sum_{k=1}^d (\Upsilon(Y_{\Omega})_k - g^{\ast}_k)^2\norm{\phi_k}^2_\Xi + \norm{f_\perp}_\Xi^2 \nonumber \\
    & =  \Big\| \Upsilon\left(Y_{\Omega}\right) - g^{\ast} \Big\|^2_\Xi + \norm{g^{\ast} - f}^2_\Xi \leq \max_{g\in \mathcal{F}}E_\Xi(g,\Upsilon(Y_{\Omega})) + \norm{g^{\ast} - f}^2_\Xi .
    \label{eq_ie_mse_objective_f}
\end{alignat}

Where $\max_{g\in \mathcal{F}}E_\Xi(g,\Upsilon(Y_{\Omega}))$ is the maximum error $\Upsilon$ has over $\mathcal{F}$ and $g^{\ast}$ defined in \eqref{eqn:ie_ideal_objective}. Therefore, \eqref{eq_ie_mse_objective_f} implies that two terms bound the extrapolation error: first is the error between the network prediction over functions in the space. The second is how far the space is from the target function $f$.

It is reasonable to assume that the sampling scheme used for the sampling of the function can help improve the ability of the network to extrapolate functions from the learned function space. Therefore, we propose a possible sampling scheme in Remark~\ref{remark_1}:
\begin{remark}[Function sampling for training]\label{remark_1}
The final term in \eqref{eq_ie_mse_objective_f} can be improved once the distribution of sampling $g$ on $\mathcal{F}$ is optimized, as it affects NExT's ability to identify $g^{\ast}$ from $Y_{\Omega}$. Therefore, we ideally want a sampling scheme that samples vectors in the coefficient vector space and lowers the error term $\max_{g\in \mathcal{F}}E_\Xi(g,\Upsilon(Y_{\Omega}))$.

Using~\eqref{eq_ie_mse_objective_f}, we notice that an error in the prediction of each coefficient given by $(\Upsilon(Y_{\Omega})_k - g^{\ast}_k)$, causes a different change to the overall error associated with the prediction of $g^{\ast}$, given by $$\sum_{k=1}^d \sum_{\substack{j=1 \\ j\neq k}}^{d} (\Upsilon(Y_{\Omega})_k - g^{\ast}_k)(\Upsilon(Y_{\Omega})_j - g^{\ast}_j)\langle \phi_k,\phi_j \rangle_\Xi + \sum_{k=1}^d (\Upsilon(Y_{\Omega})_k - g^{\ast}_k)^2\norm{\phi_k}^2_\Xi,$$ because of the different sizes of the inner products between the basis functions. Hence, we would like our sampling scheme to allow $\Upsilon$ to predict the most important coefficients better, where their importance can be calculated by the relative size of the overall sum of the inner products with other basis elements and with themselves. Under the assumption that $\Upsilon$ prediction ability is directly related to the number of instances it has in its training data, we would like to sample in closer margins important coefficients while using larger margins for less important coefficients. This will allow the training data to consist of many samples where the less important coefficient is left unchanged but with a change to the more important one. Thus allowing $\Upsilon$ to predict the more important coefficients better.
\end{remark}

The generality of the NExT framework allows us to consider various data domains $\Omega$ and extrapolation domains $\Xi$. One particular choice is domains over a manifold. One challenge in extrapolating over manifold domains is defining the function spaces and, in particular, fixing prior in terms of basis functions. We assume a compact, connected Riemannian manifold to make our discussion more concrete. Then, a natural basis arises from the Laplace–Beltrami operator, e.g.,~\cite{berard2006spectral}. In Section~\ref{sec:ExperimentalResults}, we demonstrate such settings, using the sphere as an example compact manifold and two of its distinct subdomains as $\Omega$ and $\Xi$.

\subsection{Extrapolating with anchor functions}
\label{sec:anchor_extrapolation}

Here, we discuss the anchored extrapolation problem of Definition~\ref{def_anchor_function_problem}. We propose using the same method described in Section~\ref{sec:next_overview}, but with a few necessary adjustments since here, we are given a set of functions that do not necessarily form a basis, while making sure that no anchor function is a linear combination of the rest, in light of Remark~\ref{remark_no_assumptions_lead_to_not_knowing}. Recall that by definition, each anchor function $\hat{f}_j$ satisfies $E_\Xi(f,\hat{f}_j) \leq \delta$. Namely, all anchor functions are in a ball of radius $\delta$ around $f$. Therefore, when considering their corresponding span, $\mathcal{F}$, any minimizer of~\eqref{eqn:ie_ideal_objective}, $g^{\ast} \in \mathcal{F}$, certainly satisfies $\norm{g^{\ast} - f}_\Xi \leq \delta$. 

Recall the bound~\eqref{eq_ie_mse_objective_f}. In the case of anchor extrapolation, the bound and its conclusions are specifically relevant as $f$ is not guaranteed to be in $\mathcal{F}$. Therefore, we consider the two terms and define an extended space $\mathcal{\Tilde{F}}$ such that $\mathcal{F} \subset \mathcal{\Tilde{F}}$. On the one hand, there exists $\Tilde{g}^* \in \mathcal{\Tilde{F}}$ so
\begin{equation}
    \norm{g^{\ast} - f}_\Xi \geq \norm{\Tilde{g}^* - f}_\Xi .
    \label{eq_better_filler}
\end{equation}
Therefore, minimizing one term in~\eqref{eq_ie_mse_objective_f}. On the other hand, to form $\mathcal{\Tilde{F}}$, we add extra functions, which we term ``filler'' functions. This construction may lead to a higher error in the second term as the filler functions are naturally further away from the data leading to a typical inequality: $\max_{g\in \mathcal{F}}E_\Xi(g,\Upsilon(Y_{\Omega})) \leq  \max_{g\in \mathcal{\Tilde{F}}}E_\Xi(g,\Upsilon(Y_{\Omega}))$. Namely, the added filler functions introduce a tradeoff between the two parts of~\eqref{eq_ie_mse_objective_f}. However, in practice, we show that the benefit of adding filler functions is preferable, as seen over the different test scenarios of Section~\ref{sec:ExperimentalResults}.

As for choosing the filler functions, since the anchor functions are close to the target function $f$, the filler functions should be bounded in $\Xi$ with a small constant. This way, we get more freedom to fit the extrapolated function, which is more likely to allow a better extrapolating function to be found. A specific choice of such function is case-dependent and is illustrated in the next section.

\section{Experimental results} \label{sec:ExperimentalResults}

We assess the performance of our proposed method through experiments conducted on two distinct extrapolation problems: extrapolating with a known basis, as described in Equation~\eqref{eqn:noisy_samples}, and the anchored extrapolation problem outlined in Definition~\ref{def_anchor_function_problem}. Our experimentation involves multiple phases. Initially, we address the first extrapolation problem in two instances: once using a 7-degree Chebyshev polynomials function space and another time focusing on a subset of 7-degree monotonic Chebyshev polynomials functions. Subsequently, we tackle the anchor problem by employing two sets of anchor functions: decaying functions and non-decaying functions. We proceed to demonstrate the relative strength of NExT by solving the anchor function problem in regions further from the known data. Furthermore, we compare the noise sensitivity between LS and NExT in our extrapolation settings. Lastly, we showcase NExT's effectiveness in the manifold setting by extrapolating using spherical harmonics basis functions \cite{schonefeld2005spherical}.

The preferred metric to evaluate NExT and the baseline models is Root Mean Squared Error (RMSE) formulated in \eqref{eq_rmse}. RMSE gives larger error values to larger errors but keeps the result in the original units. Therefore, it is ideal for our evaluation. RMSE is commonly used to evaluate extrapolation; see, e.g.,~\cite{martius2016extrapolation, sahoo2018learning}. Formally, the RMSE reads:
\begin{equation}
    \rmse(\{y_i\}_{i=1}^{N_e},\{\hat{y}_i\}_{i=1}^{N_e}) = \sqrt{\frac{1}{N}\sum_{i=1}^{N_e} (y_i - \hat{y_i})^2} .
    \label{eq_rmse}
\end{equation}
Here $\{y_i\}_{i=1}^{N_e}$ are the true functions values in the extrapolation area and $\{\hat{y}_i\}_{i=1}^{N_e}$ are the predicted ones. So, to evaluate a given model, $N_e$ equally spaced points will be chosen from the extrapolation area and compared to their predicted values. Since the square root function is monotonically increasing, minimizing RMSE is equivalent to minimizing MSE. Thus, our formulation in~\eqref{eq_ie_mse_objective_2} holds true.

Since we wish to show NExT's contribution even in small noise scenarios, we would like to add a small noise factor to the function. The noise signal will be measured by signal-to-noise ratio (SNR) \cite{johnson2006signal}:
\begin{equation}
    SNR = 10\log_{10}(\frac{P_s}{P_n}) .
    \label{eq:snr}
\end{equation}
Where $P_s, P_n$ are the squares of the $L_2$ norm of the signal and noise, respectively.

\subsection{Noisy function samples} \label{sec:global_function_space}
Our experiment focus on extrapolating real-valued functions with $\Omega=[-1,0.5),\Xi=[0.5,1], f: \Omega \rightarrow \Xi$. A natural basis choice would be the Chebyshev polynomials~\cite{mason2002chebyshev}. We use LS implemented by Numpy~\cite{harris2020array} as the baseline model. LS represents the standard method for extrapolation using coefficients for known basis functions, and as stated in Section~\ref{sec:deep_learning_approaches}, EQL is equivalent to it once the basis functions are known. LS uses the basic underlying framework as NExT, which relies on finding the coefficients of the function and, therefore, is an important baseline. For this problem, $\kappa=22974.71$; for reference, the trigonometric basis of sine and cosine functions has a $\kappa=11.03$. We note that since Chebyshev polynomials are only orthogonal at $[-1,1]$, Theorem~\ref{theorem_1} does not hold for $\Omega$. Yet, $\kappa$ still gives insight into how difficult extrapolating with Chebyshev polynomial is for this problem.

For both LS and NExT methods, training and evaluation data have additional noise factors of SNR=35. An SNR value of 35 is considered a low noise level and is a typical error one gets after applying denoising methods on higher noise levels, e.g.,~\cite{dangeti2003denoising, almahamdy2014performance}. 

To fully validate NExT, we first train on functions generated with $r_\sigma=0.25, r_m=1, N_l=0, N_h=7$. The validation set consists of three sets of $100$ randomly generated functions with $r_\sigma=0.25, r_m=1$, and each set has one specific degree: $3,5$, or $7$. The three different degrees are used to evaluate how all methods respond to growing complexities.

\begin{figure}[!htp]
    \centering
    \begin{subfigure}[t]{0.45\textwidth}
        \includegraphics[width=\textwidth]{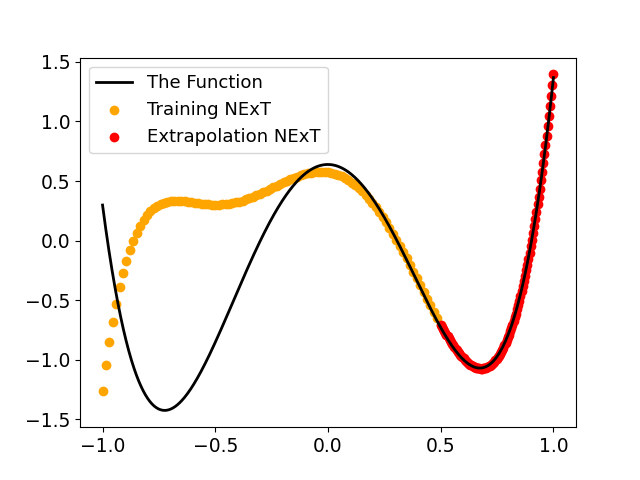}
        \caption*{}    
    \end{subfigure}\qquad
    \begin{subfigure}[t]{0.45\textwidth}
        \includegraphics[width=\textwidth]{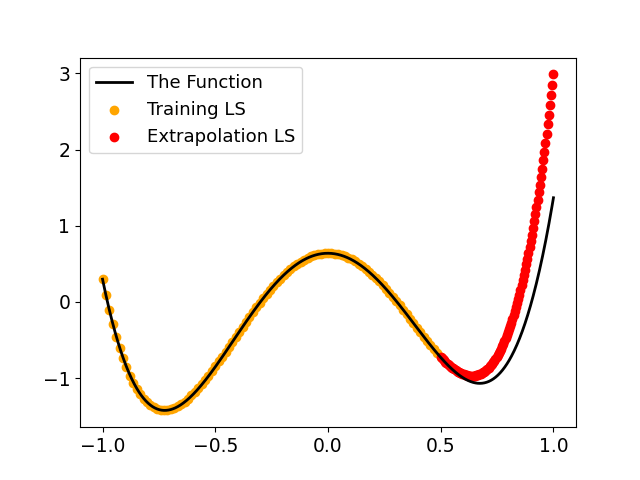}
        \caption*{}  
    \end{subfigure}
    \caption{A comparison between our method (left) and an LS-based extrapolation (right). The space consists of a 5th-degree Chebyshev polynomial where the LS achieves an RMSE score of 0.633, while NExT outperforms it with 0.015, achieving a 97.7\% error reduction rate. The LS model clearly overfits the training data domain $\Omega$, where the data is given, while NExT manages to focus on minimizing~\eqref{eq_ie_mse_objective_2} over $\Xi$.}
\label{fig_LS_vs_ie_5}
\end{figure}

\begin{table*}[htbp]
\centering
\resizebox{1\textwidth}{!}{
\begin{tabular}{@{}ccccccccccc@{}}
\toprule
&   
\multicolumn{3}{c}{NExT} & 
\multicolumn{3}{c}{LS}\\
\cmidrule(lr){2-4}\cmidrule(lr){5-7}
Num coefficients & $\Xi$ RMSE & Coefficients RMSE & $\Omega$ RMSE & $\Xi$ RMSE & Coefficients RMSE & $\Omega$ RMSE \\
\midrule
3 & \textbf{0.021} & 0.387 & 0.598 & 0.263 & 0.155 & 0.003\\
\midrule
5 & \textbf{0.054} & 0.334 & 0.597 & 0.291 & 0.124 & 0.004\\
\midrule
7 & \textbf{0.200} & 0.299 & 0.605 & 0.267 & 0.114 & 0.003 \\
\bottomrule
\end{tabular}
}
\caption{A comparison between NExT and LS. The RMSE over $\Xi$ indicates that the NExT outperforms the LS method. Note the higher values of the coefficients RMSE and $\Omega$ RMSE, which show how NExT manages to focus on the relevant area and truly minimizes~\eqref{eq_ie_mse_objective_2}. While the LS method fails to do so and focuses too heavily on $\Omega$, it loses its extrapolation ability on $\Xi$.}
\label{tbl:experiments}
\end{table*}

The results for 3 different degree polynomials with 35 snr noise added are in Table. \ref{tbl:experiments}. LS achieved RMSE scores of $0.267, 0.291$, and $0.263$ on the $3,5$, and $7$-degree polynomials, whereas NExT achieved scores of 0.021, 0.054, and 0.200, respectively. This amounts to error reduction rates of 92.1\%, 81.4\%, and 31.5\%, respectively. LS RMSE results on the approximation area $\Omega$ are an order of magnitude better than NExTs, showcasing how NExT focuses on the extrapolation area alone, substantially increasing its extrapolation ability.

\subsection{Monotonic functions with noisy samples}

We focus on noisy monotonic functions to evaluate how NExT performs on function spaces with an underlying structure not present in the basis functions. As mentioned in Remark~\ref{remark:function_space}, this is an example of using a function space that is not a vector space. Under the same evaluation method, we learn noisy monotonic functions generated by adding the minimal value of the polynomial to the first coefficient and then integrating the coefficients to get a monotonically increasing function. In the following, NExT trained on monotonic functions will be called NExT Monotonic. While using NExT to learn the whole function space, it is called NExT Whole Space. As in Section~\ref{sec:global_function_space}, a noise of SNR=35 was added.

\begin{figure}[htp]
    \centering
    \begin{subfigure}[t]{0.45\textwidth}
        \includegraphics[width=\textwidth]{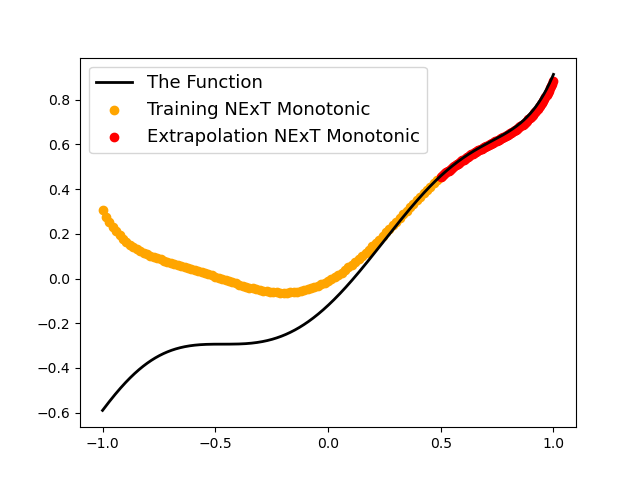}
        \caption*{}    
    \end{subfigure}\qquad
    \begin{subfigure}[t]{0.45\textwidth}
        \includegraphics[width=\textwidth]{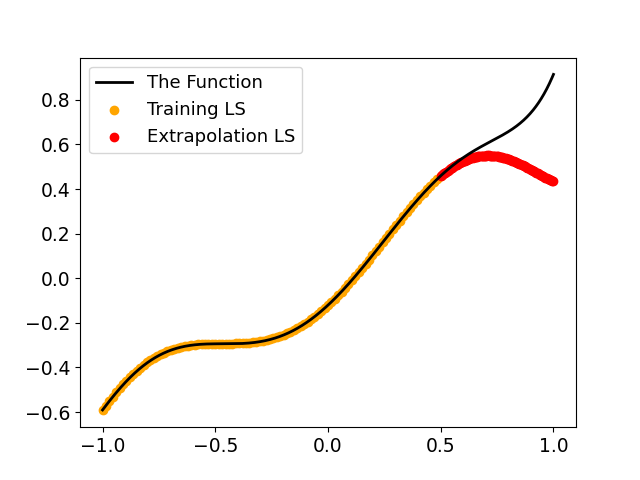}
        \caption*{}  
    \end{subfigure}
\caption{A comparison between our model (left) and an LS-based extrapolation (right) for a specific 7-degree monotonic Chebyshev polynomial. LS achieves an RMSE score of 0.190, while NExT outperforms it with 0.015, achieving a 92.3\% error reduction rate. The LS model clearly overfits the training data domain $\Omega$, where the data is given, while NExT manages to focus on minimizing~\eqref{eq_ie_mse_objective_2} over $\Xi$. In addition, although both didn't predict a true monotonic function, NExT managed to predict a monotonic function in $\Xi$, indicating it learned to predict this kind of functions, at least in the desired domain $\Xi$.}
\label{fig_LS_vs_ie_7_monotonic}
\end{figure}

\begin{table*}[htbp]
\centering
\resizebox{1\textwidth}{!}{
\begin{tabular}{@{}cccccccccccccc@{}}
\toprule
&   
\multicolumn{3}{c}{NExT Monotonic} & 
\multicolumn{3}{c}{NExT Whole Space} & 
\multicolumn{3}{c}{LS} \\
\cmidrule(lr){2-4}\cmidrule(lr){5-7}\cmidrule(lr){8-10}
Num coefficients & $\Xi$ RMSE & Coefficients RMSE & $\Omega$ RMSE & $\Xi$ RMSE & Coefficients RMSE & $\Omega$ RMSE & $\Xi$ RMSE & Coefficients RMSE & $\Omega$ RMSE \\
\midrule
3 & 0.016 & 0.367 & 0.456 & \textbf{0.012} & 0.411 & 0.603 & 0.124 & 0.072 & 0.002 \\
\midrule
5 & \textbf{0.022} & 0.372 & 0.618 & 0.062 & 0.414 & 0.777 & 0.205 & 0.103 & 0.003 \\
\midrule
7 & \textbf{0.028} & 0.362 & 0.723 & 0.324 & 0.438 & 0.827 & 0.227 & 0.097 & 0.003 \\
\bottomrule
\end{tabular}
}
\caption{A comparison of LS and NExT's two versions, trained on monotonic data, NExT Monotonic, and trained on the whole function space, NExT Whole Space. NExT Monotonic version outperforms both NExT Whole Space and LS, showing how it manages to learn complex underlining structures that are not present in the basis functions.}
\label{table_monotonic}
\end{table*}

Table \ref{table_monotonic} shows the results of the monotonic noisy function space experiment. LS results are 0.124, 0.205, and 0.227 on the noisy monotonic Chebyshev polynomials of degrees 3,5 and 7, respectively. NExT without a focus on monotonic function, NExT Whole Space, scored 0.012, 0.062, and 0.324, respectively. NExT, with a focus on monotonic functions, NExT Monotonic, scored 0.016, 0.022, and 0.028, respectively. On the 3-degree noisy monotonic polynomials, both NExT Whole Space and NExT Monotonic outperformed LS but achieved similar results where NExT Whole Space performed better. We attribute this performance gain to the size of the function space, as 3-degree polynomials are small enough that NExT, even without a more specific subspace, performs well. On the 5 and 7 noisy monotonic polynomials, NExT Monotonic outperforms both LS and NExT Whole Space by a considerable margin. NExT Monotonic achieved an error reduction rate of 64.5\% and 91.4\% over NExT Whole Space and an error reduction rate of 89.3\% and 87.7\% over LS. These results attest to the importance of letting NExT learn about the specific function space with a complex underlying structure and not the vector space containing it to enhance its extrapolation capabilities further. Although Fig. \ref{fig_LS_vs_ie_7_monotonic} shows how NExT Monotonic does not necessarily predict monotonic functions in the whole space, it did predict a monotonic function in the interest area $\Xi$.

\subsection{Extrapolating with anchor functions experiments} \label{sec:anchor_experiment}

We evaluate NExT's ability to solve the anchored extrapolation problem of Definition~\ref{def_anchor_function_problem}. The function to extrapolate is:

\begin{equation}
    f(x) = 0.8^x-\cos(x)+2\sin(2x)+\frac{1}{x+1} .
    \label{eq_extrapolation_function_1}
\end{equation}
For this part, we use $\Omega=[0,1.5 \pi)$ and $\Xi=[1.5\pi,2\pi]$. We also assume anchor functions are given, and specifically, we utilize two sets of three anchor functions for evaluation purposes. The two sets have the property of including either decaying or non-decaying functions. Decaying anchor functions have a decaying addition to~\eqref{eq_extrapolation_function_1}, whereas non-decaying anchor functions have an addition of growing functions. Both sets of anchor functions are given in Table. \ref{tbl:anchor_functions} and depicted in Fig. \ref{fig_decaying_functions}, \ref{fig_non_decaying_functions}.

\begin{table*}[htbp]
    \centering
    \resizebox{0.5\textwidth}{!}{
        \begin{tabular}{@{}clc@{}}
            \toprule
            \textbf{Anchor function type} & \textbf{Anchor function} & $\boldsymbol{\Xi}$ \textbf{RMSE} \\
        \hline
        Decaying & $f(x)+\frac{2}{x+1}$ & $0.310$ \\
        & $f(x)+\frac{3 \sin(x)}{x+1}$ & $0.345$ \\
        & $f(x)+0.9^x$ & $0.562$ \\
        \hline
        Non-decaying & $f(x)+\frac{x}{10}$ & $0.552$ \\
        & $f(x)+\sin^2(x)$ & $0.613$ \\
        & $f(x)+\frac{\log^2(x+1)}{5}$ & $0.702$ \\
        \hline
        \end{tabular}
    }
\caption{Different anchor functions and their RMSE score in the extrapolation area.}
\label{tbl:anchor_functions}
\end{table*}

\begin{figure}[ht]
    \centering
    \begin{subfigure}[]{0.32\textwidth}
        \includegraphics[width=\textwidth]{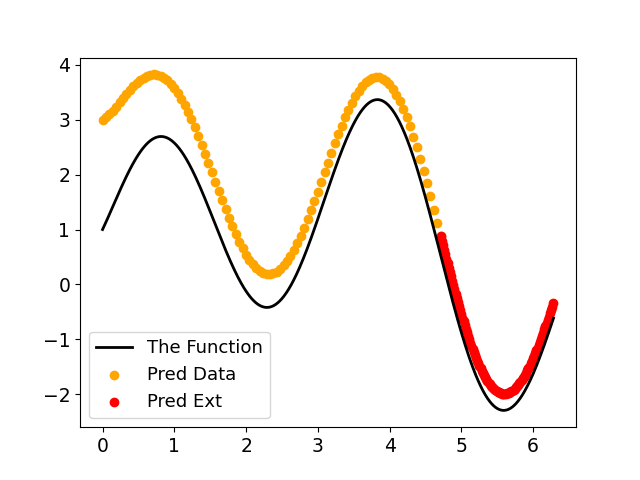}
        \caption*{}    
    \end{subfigure}
    \begin{subfigure}[]{0.32\textwidth}
        \includegraphics[width=\textwidth]{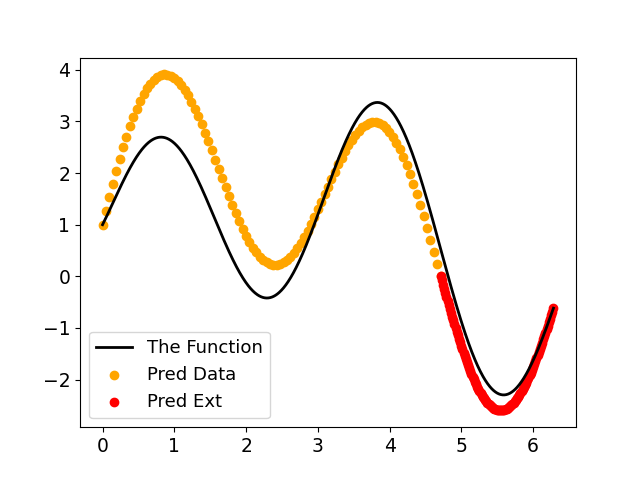}
        \caption*{}  
    \end{subfigure}
    \begin{subfigure}[]{0.32\textwidth}
        \includegraphics[width=\textwidth]{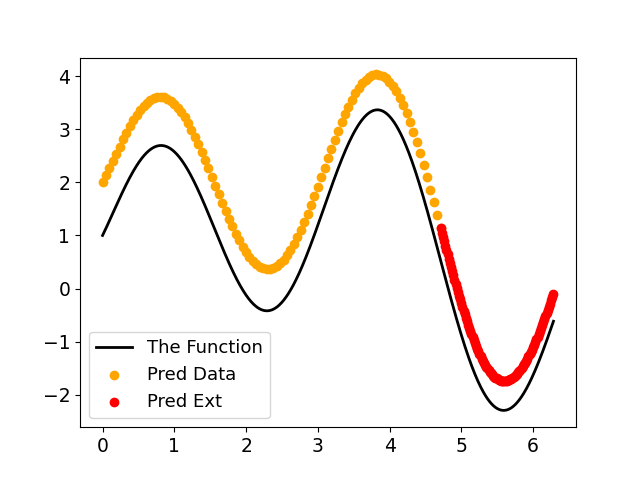}
        \caption*{}  
    \end{subfigure}
\caption{Anchor functions with decaying functions. Plotted next to the wished extrapolation function $f$. $f(x)+\frac{2}{x+1}$ (left), $f(x)+\frac{3 \sin(x)}{x+1}$ (middle), $f(x)+0.9^x$ (right), with RMSE scores of 0.310, 0.345, and 0.562 respectively.}
\label{fig_decaying_functions}
\end{figure}

\begin{figure}[ht]
    \centering
    \begin{subfigure}[]{0.32\textwidth}
        \includegraphics[width=\textwidth]{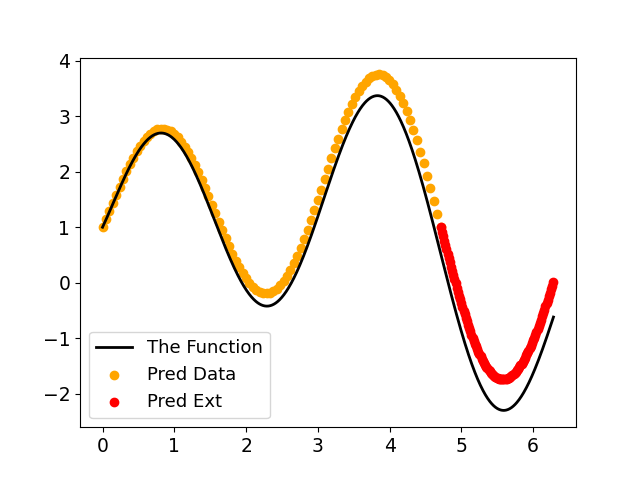}
        \caption*{}    
    \end{subfigure}
    \begin{subfigure}[]{0.32\textwidth}
        \includegraphics[width=\textwidth]{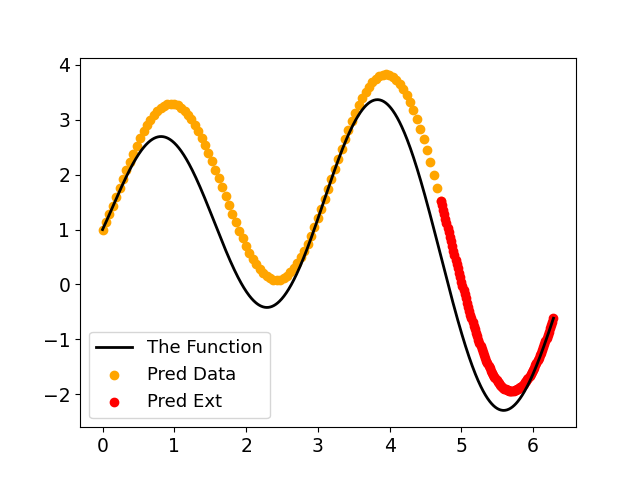}
        \caption*{}  
    \end{subfigure}
    \begin{subfigure}[]{0.32\textwidth}
        \includegraphics[width=\textwidth]{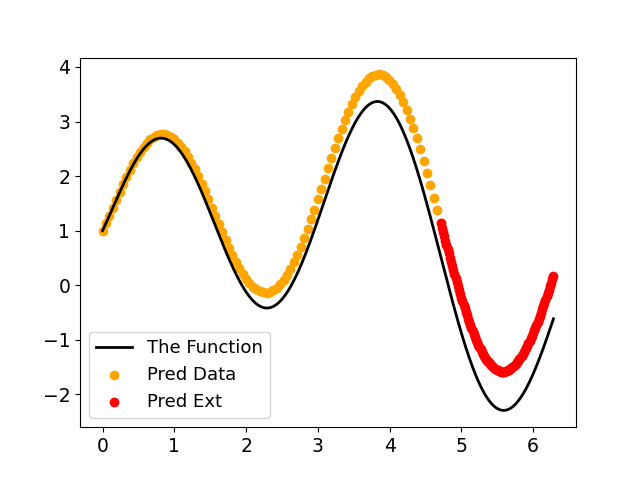}
        \caption*{}  
    \end{subfigure}
\caption{Anchor functions with non-decaying functions. Plotted next to the wished extrapolation function $f$. $f(x)+\frac{x}{10}$ (left), $f(x)+\sin^2(x)$ (middle),  and $f(x)+\frac{\log^2(x+1)}{5}$ (right), with RMSE scores of 0.552, 0.613, and 0.702 respectively.}
\label{fig_non_decaying_functions}
\end{figure}

Since each anchor function is known to be close to $f$, it is possible only to use one of them for extrapolation, but since the underlying extrapolation is unknown, it is not possible to choose which is preferred. Therefore, a successful algorithm should, in theory, outperform the mean error rate, and an algorithm that has an RMSE score that is lower than the lowest RMSE score of all anchor functions will be considered an achievement. As mentioned in Section~\ref{sec:anchor_extrapolation}, NExT proposes to use the anchor functions given as a frame to create the function space $\mathcal{F}$ it will learn to extrapolate.

\begin{figure}[ht]
    \centering
        \begin{subfigure}[b]{0.475\textwidth}
            \centering
            \includegraphics[width=\textwidth]{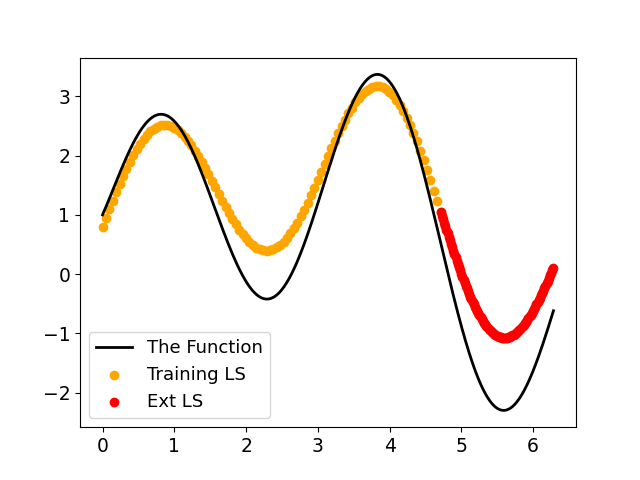}
        \end{subfigure}
        \hfill
        \begin{subfigure}[b]{0.475\textwidth}  
            \centering 
            \includegraphics[width=\textwidth]{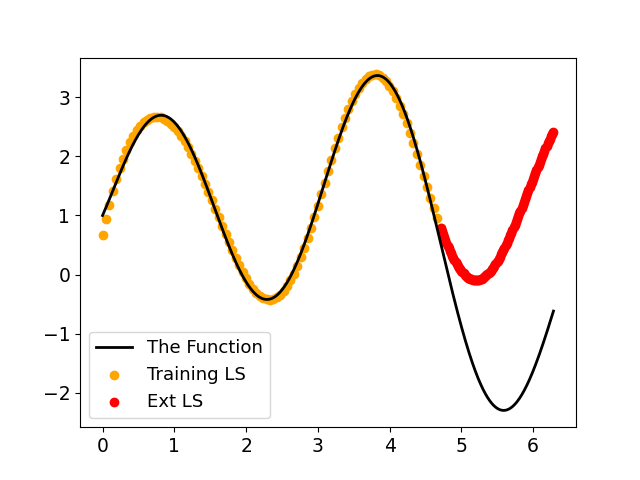} 
        \end{subfigure}
        \vskip\baselineskip
        \begin{subfigure}[b]{0.475\textwidth}   
            \centering 
            \includegraphics[width=\textwidth]{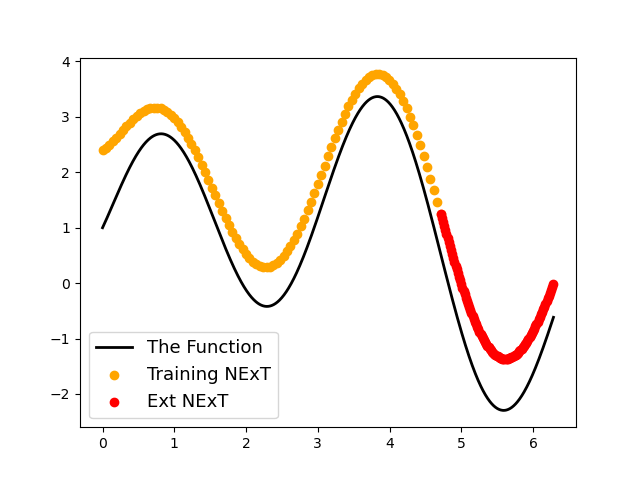}  
        \end{subfigure}
        \hfill
        \begin{subfigure}[b]{0.475\textwidth}   
            \centering 
            \includegraphics[width=\textwidth]{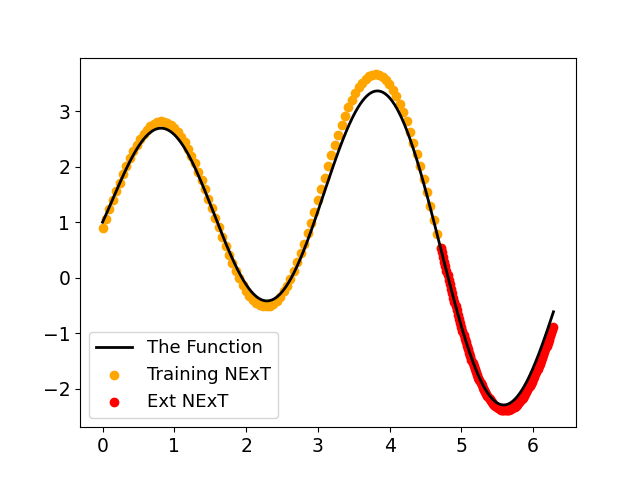}
        \end{subfigure}
\caption{Results for extrapolating \eqref{eq_extrapolation_function_1} for LS (top) and NExT (bottom) using a frame. Left figures use the 3 decaying functions only, and right figures use the 3 decaying functions with an additional 7 basis elements from the trigonometric functions. LS RMSE scores are 1.02 and 2.399, respectively, for without and with filler functions, and NExT scores are 0.844 and 0.131, respectively.}
\label{fig_unknown_function_with_decaying_functions}
\end{figure}

\begin{figure}[ht]
    \centering
        \begin{subfigure}[b]{0.475\textwidth}
            \centering
            \includegraphics[width=\textwidth]{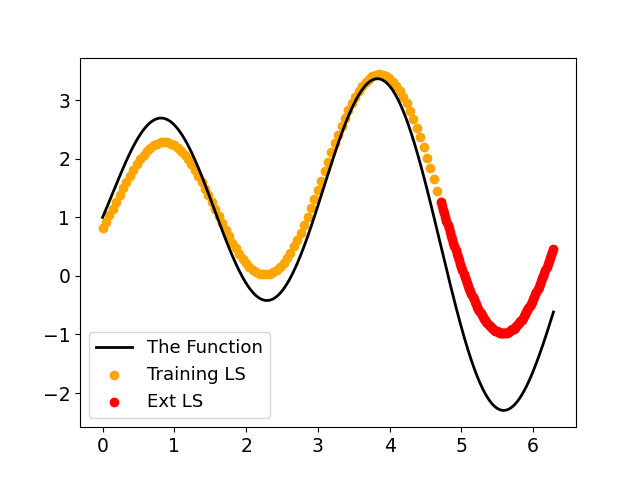}
        \end{subfigure}
        \hfill
        \begin{subfigure}[b]{0.475\textwidth}  
            \centering 
            \includegraphics[width=\textwidth]{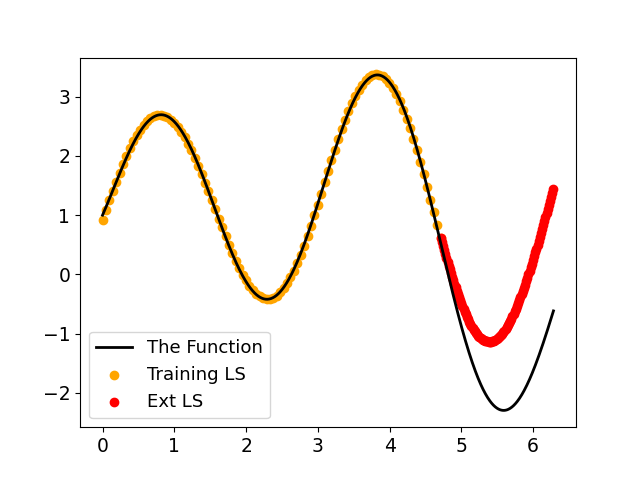} 
        \end{subfigure}
        \vskip\baselineskip
        \begin{subfigure}[b]{0.475\textwidth}   
            \centering 
            \includegraphics[width=\textwidth]{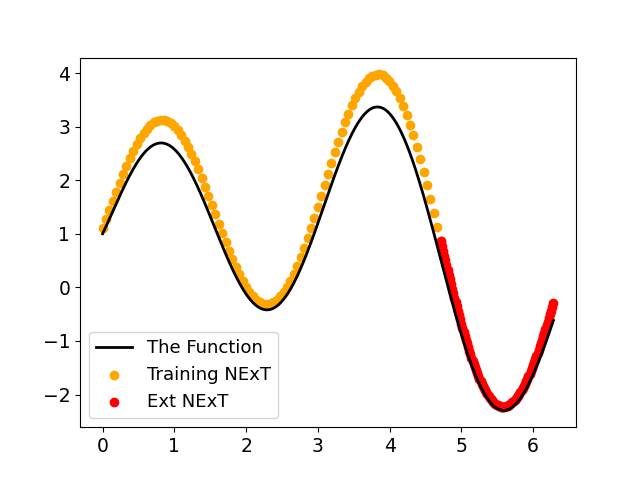}  
        \end{subfigure}
        \hfill
        \begin{subfigure}[b]{0.475\textwidth}   
            \centering 
            \includegraphics[width=\textwidth]{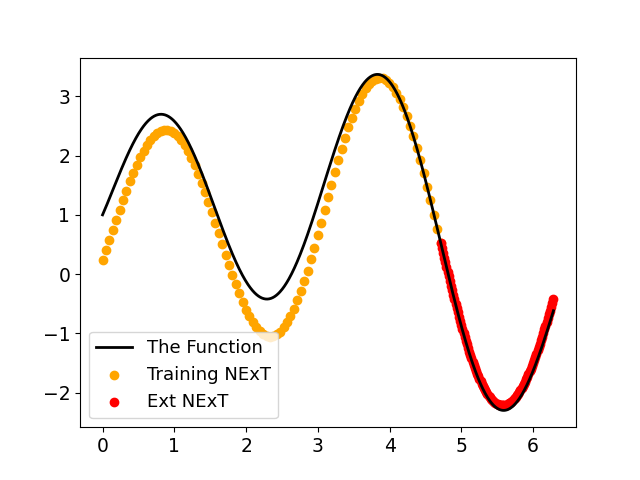}
        \end{subfigure}
\caption{Results for extrapolating \eqref{eq_extrapolation_function_1} for LS (top) and NExT (bottom) using a frame. Left figures use the 3 decaying functions only, and right figures use the 3 decaying functions with an additional 7 basis elements from the trigonometric functions. LS RMSE scores are 1.172 and 1.309, respectively, for without and with filler functions, and NExT scores are 0.183 and 0.098, respectively.}
\label{fig_unknown_function_with_non_decaying_functions}
\end{figure}

Figures \ref{fig_unknown_function_with_decaying_functions} and \ref{fig_unknown_function_with_non_decaying_functions} present the results of the LS model and NExT for both decaying and non-decaying anchor function frames. In these experiments, we utilized seven filler functions from the trigonometric basis. The condition numbers ($\kappa$) for the anchor function frames were 0.6628 and 0.38174 for the decaying and non-decaying functions, respectively. With the inclusion of filler functions, these values increased to 24.2598 and 12.4486, respectively.

For the decaying anchor function problem, LS yielded extrapolation results of 1.02 without filler functions and 2.399 with filler functions, while NExT achieved scores of 0.844 and 0.131, respectively. This translates to error reduction rates of 17.3\% and 94.5\%, respectively, and an overall reduction rate of 87.2\% compared to the best LS result using three anchor frames. Notably, NExT outperformed all individual anchor functions when filler functions were included, achieving a reduction of 57.7\% from the best error rate.

Regarding non-decaying functions, NExT exhibited even better performance, with extrapolation results of 0.183 and 0.098 compared to LS's 1.172 and 1.309. This represents reduction rates of 84.4\% and 92.5\%, respectively, and a reduction of 91.6\% from the best LS error rate. Additionally, NExT achieved an error reduction rate of 82.2\% compared to the closest anchor function.

Based on the condition numbers $\kappa$, it seems that including filler functions in the frame makes the extrapolation more challenging, which seems to go against their intended purpose. This observation holds true for LS, as seen in its poorer performance compared to LS without filler functions. However, NExT, by focusing on minimizing $\Xi$ instead of $\Omega$, renders the condition number irrelevant. Unlike LS, NExT performs well with filler functions, highlighting its ability to prioritize minimizing the extrapolation area $\Xi$ rather than the approximation area $\Omega$. This underscores the importance of defining the right objective function. Consequently, we can conclude that NExT performs well, surpassing LS in both scenarios and outperforming each individual anchor function, thus demonstrating its usefulness in addressing \ref{def_anchor_function_problem}. We note that in this scenario, the conditions of Theorem~\ref{theorem_1} do not hold, even more so than in Section~\ref{sec:global_function_space}. Therefore, LS does not outperform even though the problem has condition numbers lower than 1.

\begin{remark}[Using $\kappa$]
    We recall the LS results over the anchor function problem Definition~\ref{def_anchor_function_problem} and inspect the values of $\kappa$ of~\eqref{eq:least_squares_extrapolation_condition_number}, even on problems that do not fully fit the conditions such as in Theorem~\ref{theorem_1}. On one hand, the resulting $\kappa$ for each problem were $0.6628, 0.38174, 24.2598, and 12.4486$ for the decaying, non-decaying, decaying with filler functions, and non-decaying with filler functions, respectively. On the other hand, the LS obtained RMSE scores of $1.02, 1.172, 2.399$, and $1.309$ for the decaying, non-decaying, decaying with filler functions, and non-decaying with filler functions, respectively. Therefore, there is a clear correlation between $\kappa$ and LS's RMSE score, attesting to the ability to use $\kappa$ to choose the basis for extrapolation, evening a more general settings than we used for the formal theorems.
    \label{remark:condition_number_anchor_functions}
\end{remark}

\begin{remark}[Anchor functions]
    As shown in the results in Fig. \ref{fig_unknown_function_with_decaying_functions}, \ref{fig_unknown_function_with_non_decaying_functions}, NExT manages to improve the RMSE of the anchor functions used. In scenarios where anchor functions are harder to come by, it is possible to use different extrapolation techniques to create anchor functions. This is out of the scope of this paper and is left for further research.
\end{remark}

\subsection{Deep learning experimental results}
\label{sec:deep_learning_results}
To fully evaluate NExT, we continue to compare it with two modern deep learning models. Therefore, we compare the results of Snake and ReLU activation functions on the Global Function Space and to the anchor function problem Definition~\ref{def_anchor_function_problem}. We test both models on the Chebyshev noisy polynomials as in Section \ref{sec:global_function_space}.

\begin{table*}[htbp]
\centering
\resizebox{1\textwidth}{!}{
\begin{tabular}{@{}cccccccccccc@{}}
\toprule
&   
\multicolumn{2}{c}{NExT} & 
\multicolumn{2}{c}{ReLU Net}  &
\multicolumn{2}{c}{Snake} \\
\cmidrule(lr){2-3}\cmidrule(lr){4-5}\cmidrule(lr){6-7}
Num coefficients & $\Xi$ RMSE & $\Omega$ RMSE & $\Xi$ RMSE & $\Omega$ RMSE & $\Xi$ RMSE & $\Omega$ RMSE \\
\midrule
3 & \textbf{0.021} &  0.598 & 0.361 &  0.005 & 0.095 & 0.004 \\
\midrule
5 & \textbf{0.054} & 0.597 & 0.705 &  0.030 & 0.628 & 0.015 \\
\midrule
7 & \textbf{0.200} & 0.605 & 0.874 & 0.055 & 0.815 & 0.056 \\
\bottomrule
\end{tabular}
}
\caption{A comparison between NExT, ReLU nets and Snake on 3,5, and 7 degree Chebyshev polynomials.}
\label{tbl:experiments_deep_learning_regular}
\end{table*}

The results on the noisy Chebyshev polynomials are given in Table~\ref{tbl:experiments_deep_learning_regular}. Snake RMSE scores are 0.095, 0.628, and 0.815 on the 3,5, and 7-degree polynomials, respectively. ReLU RMSE scores were worse, with 0.361, 0.705, and 0.874, respectively. NExT outperforms both deep learning models with scores of 0.021, 0.054, and 0.200. Resulting in error reduction rates of 77.9\%, 91.4\%, and 75.5\% on the 3,5, and 7-degree noisy Chebyshev polynomials.

\begin{figure}[ht]
    \centering
    \begin{subfigure}[]{0.32\textwidth}
        \includegraphics[width=\textwidth]{figures/next_anchor_function_10_non_decaying.png}
        \caption*{}    
    \end{subfigure}
    \begin{subfigure}[]{0.32\textwidth}
        \includegraphics[width=\textwidth]{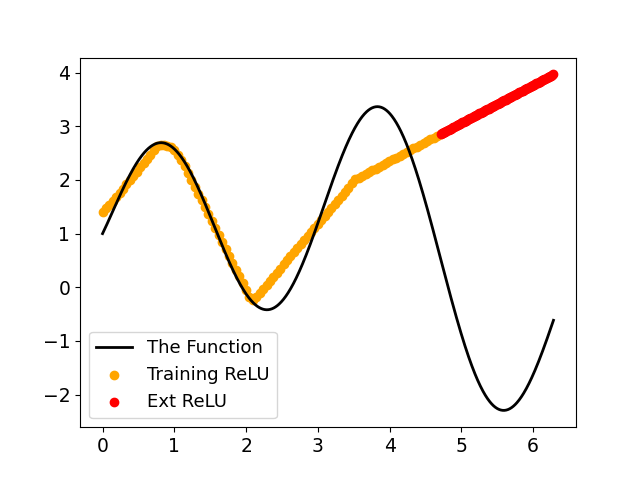}
        \caption*{}  
    \end{subfigure}
    \begin{subfigure}[]{0.32\textwidth}
        \includegraphics[width=\textwidth]{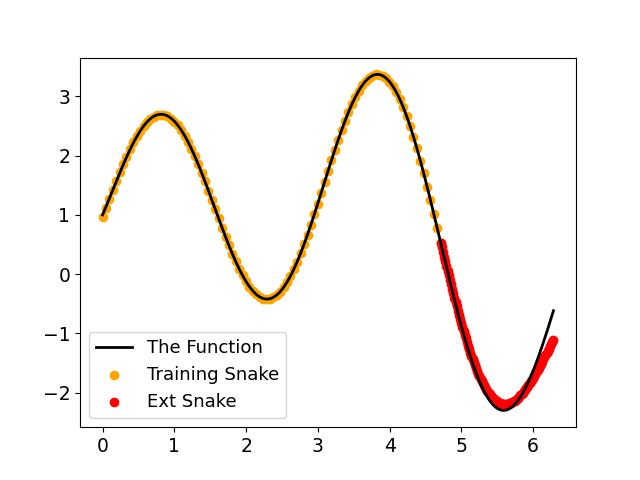}
        \caption*{}  
    \end{subfigure}
\caption{A comparison of extrapolating \eqref{eq_extrapolation_function_1} for NExT while using the non-decaying anchor function frame with filler functions (left), ReLU network (middle), and Snake (right). NExT RMSE score is 0.098, while Snake and ReLU's scores are 4.954 and 0.150 respectively. Snake reported results while learning the frequency and is better while not, which achieved a score of 0.363.}
\label{fig_unknown_function_with_deep_learning_models}
\end{figure}

The results of extrapolating \eqref{eq_extrapolation_function_1}, are in Fig. \ref{fig_unknown_function_with_deep_learning_models}. Snake relatively performs well with an RMSE score of 0.150 compared to the 4.954 RMSE score of the ReLU net. Nonetheless, NExT still manages to outperform Snake-net and obtain 0.131 and 0.098 scores with the decaying and non-decaying anchor functions, respectively, resulting in 12.7\% and 34.7\% error reduction rates, respectively. For Snake, since there is no simple method in choosing whether to learn or not to learn the frequency, the reported results of Snake consist of the best value of the two choices, which are to learn the frequency; otherwise, Snake with a constant frequency of 1 achieved an RMSE score of 0.363.

\subsection{Extrapolation over ``far'' domains}

While devising our methodology, we encountered no constraints limiting us to a narrow vicinity. As NExT, adeptly learns to extrapolate to a particular $\Xi$, we conducted tests encompassing more distant areas. The problem is extrapolating \eqref{eq_extrapolation_function_1} as in Section \ref{sec:anchor_experiment} with areas of a distance of 1,3, and 7 from $\Omega$, with the nondecaying anchor functions. NExT will use the frame with the 7 filler functions, and LS will use the frame that works best between the frame with the anchor functions alone or with the 7 filler functions. The distances represent very far areas compared to the learning area, which is 1.5$\pi$.

\begin{figure}[!ht]
    \centering
    \begin{subfigure}[]{\textwidth}
        \includegraphics[width=.32\textwidth]{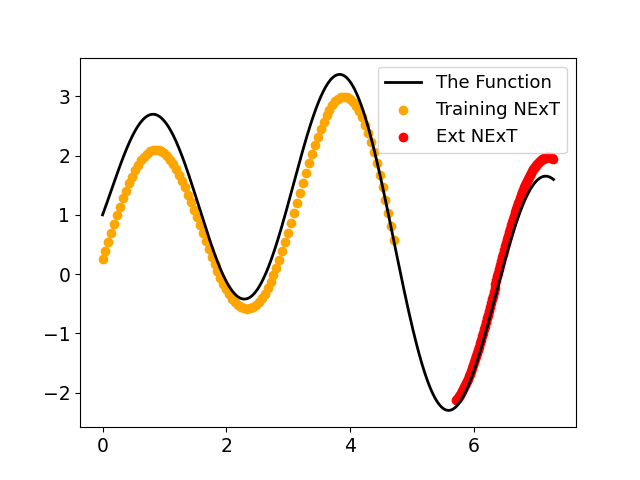}
        \includegraphics[width=.32\textwidth]{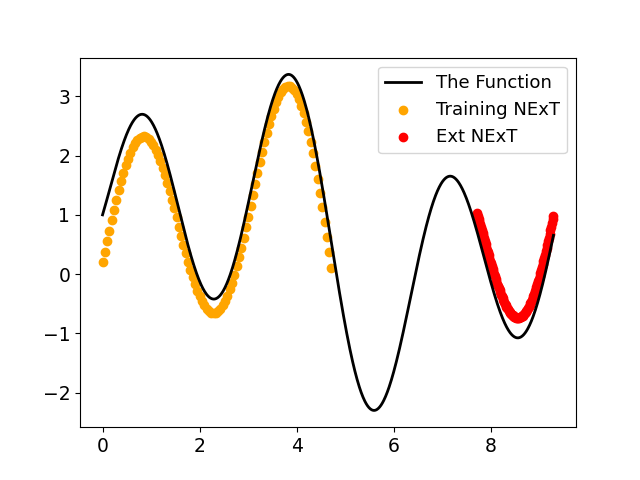}
        \includegraphics[width=.32\textwidth]{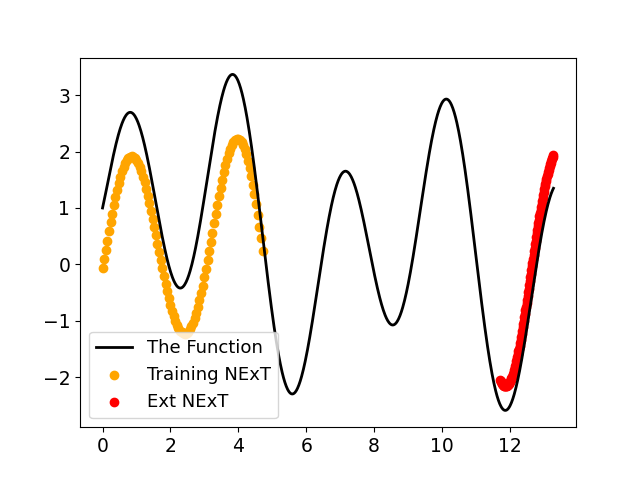}
        \caption*{Our method}  
    \end{subfigure}
    \begin{subfigure}[]{\textwidth}
        \includegraphics[width=.32\textwidth]{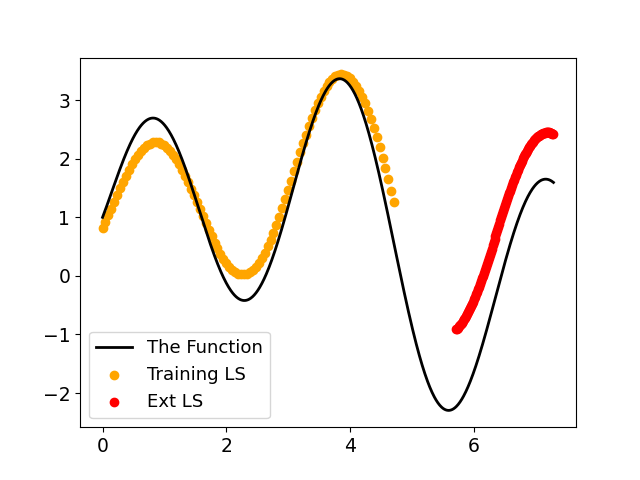}
        \includegraphics[width=.32\textwidth]{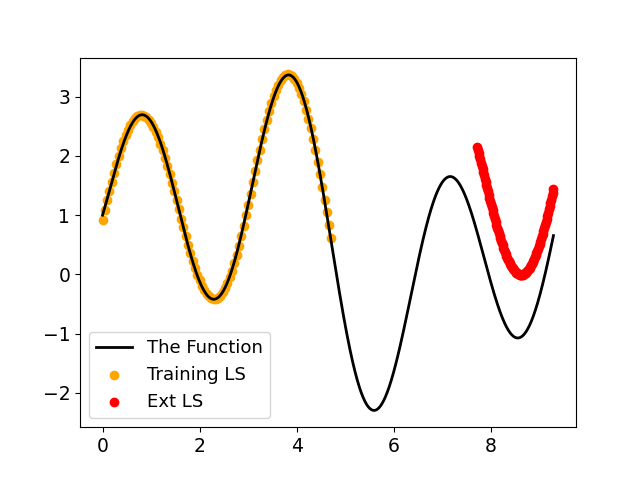}
        \includegraphics[width=.32\textwidth]{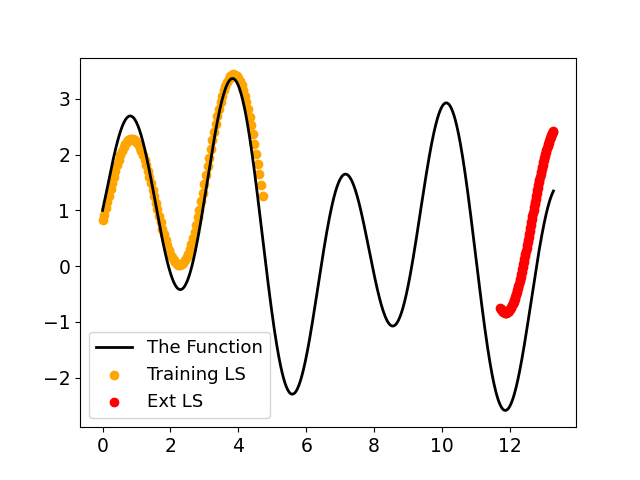}
        \caption*{Least squares}  
    \end{subfigure}
    \begin{subfigure}[]{\textwidth}
        \includegraphics[width=.32\textwidth]{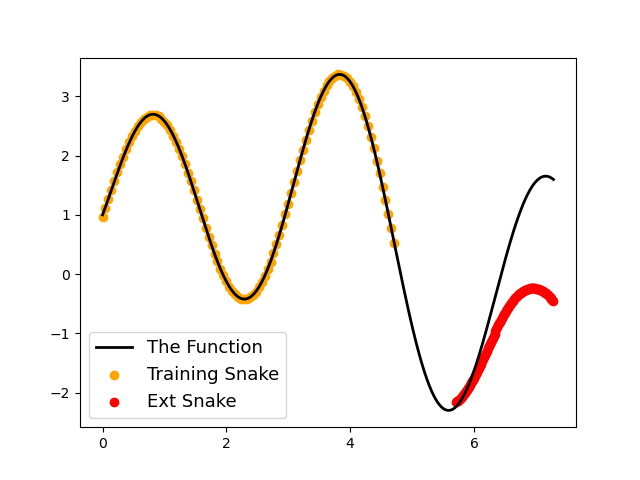}
        \includegraphics[width=.32\textwidth]{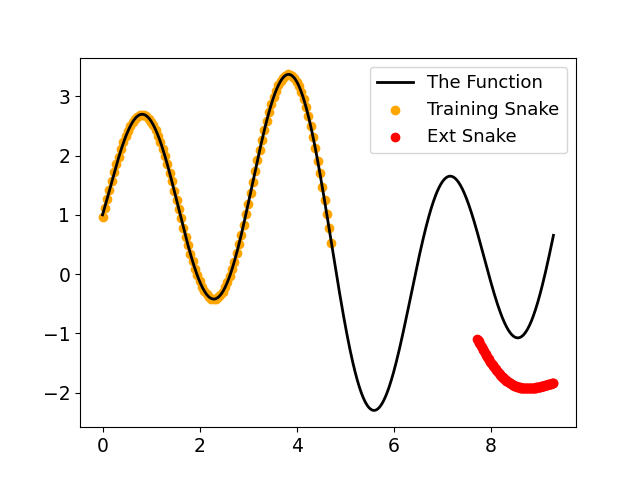}
        \includegraphics[width=.32\textwidth]{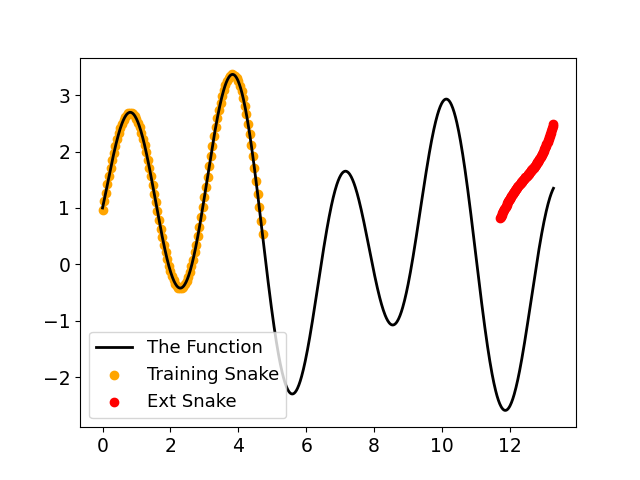}
        \caption*{Snake}  
    \end{subfigure}
\caption{A comparison of extrapolating \eqref{eq_extrapolation_function_1} for NExT while using the non-decaying anchor function frame with filler functions (top), LS with the best frame (middle), and Snake net (bottom). Left, middle, and right rows indicate 1,3 and 7 distance of $\Omega$ to $\Xi$. The results for distance 1.0 are 0.225, 1.020, and 1.183 for NExT, LS, and Snake, respectively. For distances 3.0, 0.333, 1.126, and 1.387, respectively. For distances 7.0, 0.492, 1.478, 2.705, respectively.}
\label{fig:fatherawayanchorfunctions}
\end{figure}

The results of the farther away areas are in Fig. \ref{fig:fatherawayanchorfunctions}. NExT performs considerably better than LS and Snake, as the extrapolation area $\Xi$ information is incorporated into its learning algorithm. The results are 0.225, 0.333, and 0.492 for NExT with 1,3 and 7 distances, respectively. For LS is 1.020, 1.126, and 1.478, respectively, and Snake achieved scores of 1.183, 1.387, and 2.705, respectively. As the $\Xi$ is farther away from $\Omega$ all algorithms result deteriorate but NExT score suffers the least deterioration. NExT achieved an error reduction rate of 77.9\%, 70.4\%, and 66.7\% from the best scores LS or Snake achieved for the 1,3 and 7 distances, respectively. Snake performance deteriorated the most from the close area in Section \ref{sec:deep_learning_results}. Inline with results other neural networks encounter during extrapolation \cite{ziyin2020neural, belcak2022periodic, parascandolo2016taming}, attesting to the importance of using a well-defined frame that its extrapolation is well understood. In addition, as Snake does not use the anchor functions and lacks the same assumptions LS and NExT have on the extrapolated function, this result was anticipated. The anchor function RMSE for $f(x)+\frac{x}{10}$ is 0.651, 0.851, and 1.251, for the 1,3, and 7 distance between $\Omega$ and $\Xi$. For $f(x) + sin^2(x)$, 0.296, 0.681, and 0.248. And lastly for $f(x)+\frac{log^2(x+1)}{5}$, 0.812, 1.014, and 1.355. Thus, NExT improved all individual anchor functions in the 1 and 3 area distances and the mean RMSE in the 7 distance. Attesting to NExT's strength in solving farther away areas.

\subsection{Noise sensitivity} \label{sec:noise_sensitivity}
To showcase NExT's ability to withstand different noisy environments over baseline models, sensitivity to different noise levels were tested. We used different noise levels, SNR=20,35, and 50, and recorded each algorithm's degradation on 100 Chebyshev 5-degree polynomials generated as in Section \ref{sec:global_function_space}.

\begin{table*}[htbp]
\centering
\resizebox{1\textwidth}{!}{
\begin{tabular}{@{}ccccccccccc@{}}
\toprule
&   
\multicolumn{3}{c}{NExT} & 
\multicolumn{3}{c}{LS}\\
\cmidrule(lr){2-4}\cmidrule(lr){5-7}
SNR & $\Xi$ RMSE & Coefficients RMSE & $\Omega$ RMSE & $\Xi$ RMSE & Coefficients RMSE & $\Omega$ RMSE \\
\midrule
No Noise & 0.023 & 0.351 & 0.628 & $\mathbf{8.05 \times 10^{-7}}$ & $3.41\times 10^{-7}$ & $6.13\times 10^{-7}$\\
\midrule
50 & \textbf{0.023} & 0.319 & 0.549 & 0.042 & 0.018 & 0.001\\
\midrule
40 & \textbf{0.033} & 0.333 & 0.596 & 0.164 & 0.070 & 0.002 \\
\midrule
35 & \textbf{0.054} & 0.334 & 0.597 & 0.291 & 0.124 & 0.004 \\
\midrule
30 & \textbf{0.105} & 0.327 & 0.570 & 0.438 & 0.187 & 0.006 \\
\midrule
20 & \textbf{0.402} & 0.415 & 0.607 & 1.357 & 0.577 & 0.017 \\
\bottomrule
\end{tabular}
}
\caption{A comparison between NExT and LS noisy sensitivity. NExT was trained on SNR=35 only.}
\label{tbl:experiments_noise_sensitivity}
\end{table*}

The noise sensitivity examples appear in Table~\ref{tbl:experiments_noise_sensitivity}. On functions without noise, LS finds a near-perfect fit, but with as little as SNR=50, its result drops from $8.05\times 10^{-7}$ to 0.042. This trend continues as we increase the noise level. NExT, on the other hand, with the presence of no noises, achieves 0.023, but this error rate does not deteriorate rapidly, resulting in outperforming LS on every noise level we checked. NExT's poor performance on the no-noise data set can be attributed to it being trained on SNR=35 noise only and to the lack of precise convergence with gradient descent with batches~\cite{cacciola2023convergence}.

\subsection{Function extrapolation over a manifold}

This subsection illustrates the generality of our methodology for addressing extrapolation over manifold domains. In this case, as a compact manifold, we use the sphere. This demonstration also shows, by definition, multivariate domains with a larger number of input parameters. In this case, we conducted an experiment using real-valued spherical harmonics as basis functions~\cite{schonefeld2005spherical}. The standard definition of spherical harmonics is
\begin{equation}
    Y^m_l(\theta,\phi)=\sqrt{(2l+1)\frac{(l-m)!}{(l+m)!}}P^m_l(cos\theta)e^{im\phi} ,
    \label{eq:spherical_harmonics}
\end{equation}
where $l=0,1,2...$, $-l\leq m \leq l$, $P^m_l$ the associated Legendre polynomial \cite{weisstein2011associated}, and $\theta$ and $\phi$ are the spherical coordinates. $l$ is considered the degree of the polynomial, meaning that, for example, polynomials of degree 2 contain basis elements with at most $l=2$ and can contain $9$ basis elements ($2l+1$ for each degree $l$). Spherical harmonics in the form \eqref{eq:spherical_harmonics} are complex-valued functions. Therefore, we use a simple manipulation to derive the real-valued spherical harmonics:
\begin{equation}
Y_{lm}(\theta, \phi) = 
\begin{cases}
\frac{1}{\sqrt{2}}( Y^m_l(\theta,\phi)+(-1)^m Y^{-m}_l(\theta,\phi)), & m > 0\\
 Y^m_l(\theta,\phi), & m = 0 \\
\frac{1}{i\sqrt{2}}( Y^{-m}_l(\theta,\phi)-(-1)^m Y^{m}_l(\theta,\phi)), & m < 0
\end{cases}
\label{eq:spherical_harmonics_real}
\end{equation}

The polynomials were arranged in the following order: $(l=0, m=0), (l=1, m=1), (l=1, m=0), (l=1, m=-1), (l=2, m=2),$ and so on, establishing an order for the polynomial basis. The extrapolation extends to the upper hemisphere, with the training confined to the bottom third of the sphere. An illustration of the corresponding regions is given in Fig \ref{fig:spherical_harmonics_extraplation_and_trianing}. The resulting $\kappa$ value is 41.027, suggesting that the LS may perform inadequately, while NExT is anticipated to overcome such challenge. The training set comprised of a 100-point grid, of equally spaced points in each dimension, while the extrapolation area consisted of a 10,000-point equally spaced grid.

\begin{figure}[htp]
    \centering
        \includegraphics[width=0.6\linewidth]{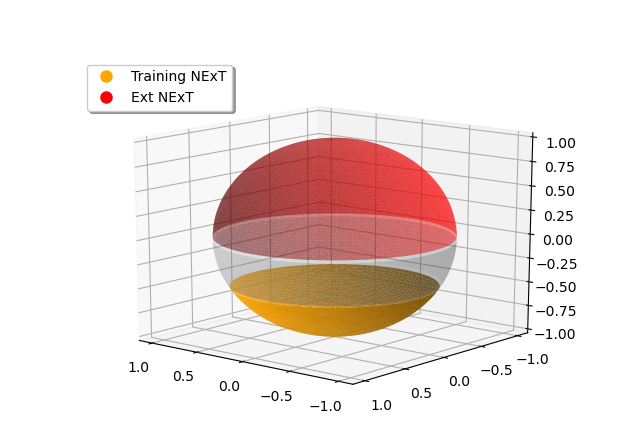}
\caption{A visual representation of the spherical harmonics extrapolation problem depicts distinct regions for training and extrapolation. The training area is confined to the lower third of the sphere, highlighted in orange, while the extrapolation area encompasses the upper hemisphere in red. The intermediate grayed area delineates the space between the training and extrapolation regions.}
\label{fig:spherical_harmonics_extraplation_and_trianing}
\end{figure}

\begin{table*}[htbp]
\centering
\resizebox{1\textwidth}{!}{
\begin{tabular}{@{}ccccccccccc@{}}
\toprule
&   
\multicolumn{3}{c}{NExT} & 
\multicolumn{3}{c}{LS}\\
\cmidrule(lr){2-4}\cmidrule(lr){5-7}
Num coefficients & $\Xi$ RMSE & Coefficients RMSE & $\Omega$ RMSE & $\Xi$ RMSE & Coefficients RMSE & $\Omega$ RMSE \\
\midrule
5 & \textbf{0.013} & 0.017 & 0.006 & 0.911 & 0.635 & 0.009\\
\midrule
9 & \textbf{0.046} & 0.035 & 0.009 & 0.875 & 0.609 & 0.009 \\
\bottomrule
\end{tabular}
}
\caption{A comparison between NExT and LS. The RMSE over $\Xi$ indicates that NExT outperforms the LS method. Unlike the one-dimensional case, NExT manages to present comparable results to the LS on $\Omega$.}
\label{tbl:experiments_spherical_harmonics}
\end{table*}

\begin{figure}[htp]
    \centering
    \begin{subfigure}[t]{0.45\textwidth}
        \includegraphics[width=\textwidth]{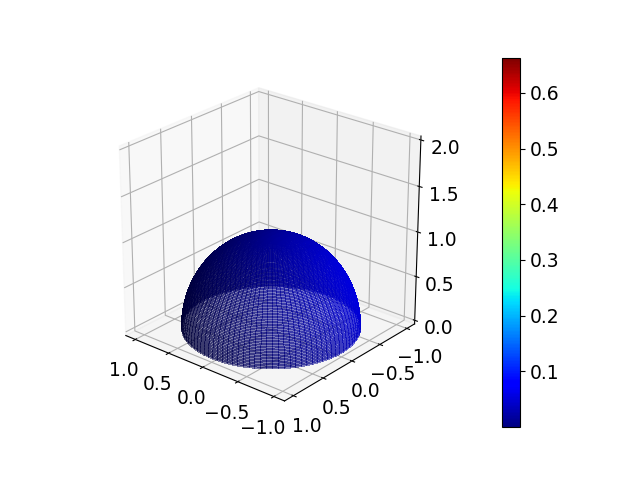}
        \caption*{}    
    \end{subfigure}\qquad
    \begin{subfigure}[t]{0.45\textwidth}
        \includegraphics[width=\textwidth]{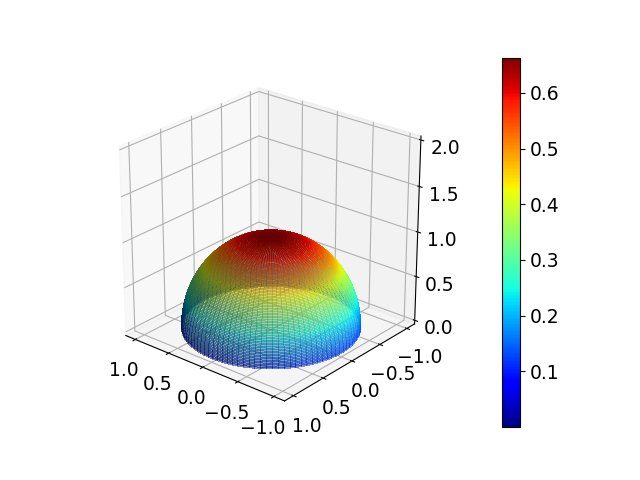}
        \caption*{}  
    \end{subfigure}
\caption{An error plot comparing our model (left) and an LS-based extrapolation (right) for a 2-degree (9 basis elements) real-valued spherical harmonic polynomial. Brighter colors stand for higher errors. LS achieves an RMSE score of 0.471, while NExT outperforms it with 0.036, achieving a 92.3\% error reduction rate.}
\label{fig:spherical_harmonics}
\end{figure}

\begin{figure}[htp]
    \centering
    \begin{subfigure}[t]{0.32\textwidth}
        \includegraphics[width=\textwidth]{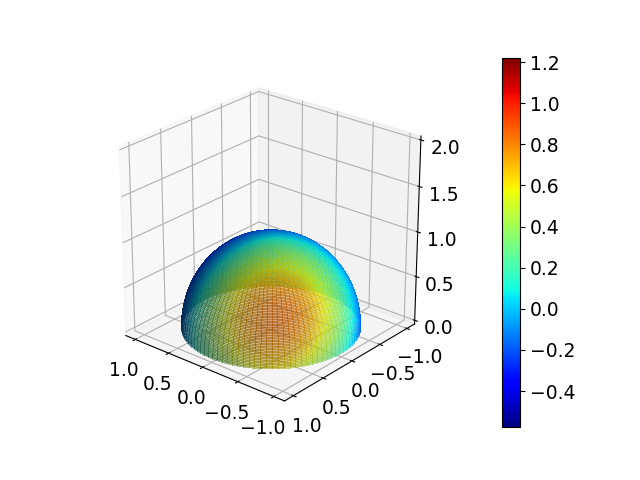}
        \caption*{}    
    \end{subfigure}
    \begin{subfigure}[t]{0.32\textwidth}
        \includegraphics[width=\textwidth]{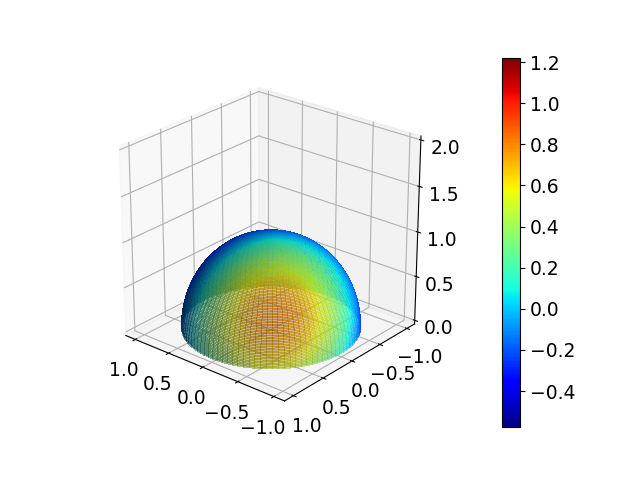}
        \caption*{}  
    \end{subfigure}
    \begin{subfigure}[t]{0.32\textwidth}
        \includegraphics[width=\textwidth]{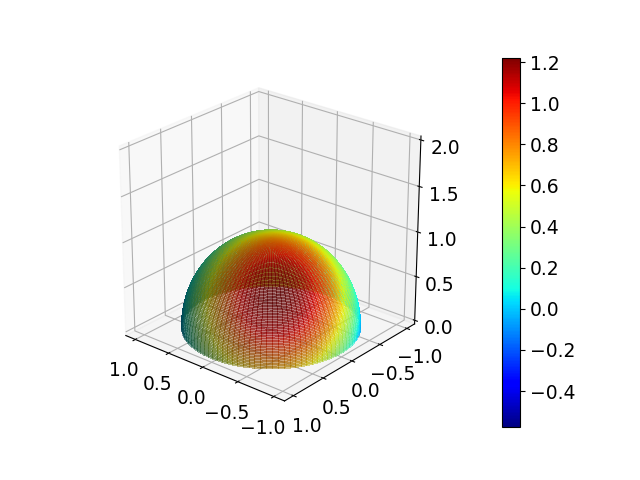}
        \caption*{}  
    \end{subfigure}
\caption{Plot comparing true function (left) to our model (middle) and a LS extrapolation (right) for a 2-degree (9 basis elements) real-valued spherical harmonic polynomial. LS achieves an RMSE score of 0.471, while NExT outperforms it with 0.036, achieving a 92.3\% error reduction rate.}
\label{fig:spherical_harmonics_real_function}
\end{figure}

Both NExT and LS predicted 2-degree polynomials (9 basis elements) and were given data sets containing 100 1-degree (5 basis elements) and 2-degree polynomials randomly sampled with $r_\sigma=0.25$ and $r_m=1$. The results are in Table. \ref{tbl:experiments_spherical_harmonics}. NExT outperforms LS in all polynomial degrees, achieving error reduction rates of 94.7\% and 98.6\%. In addition, unlike the one-dimensional case, NExT also managed to have comparable results to LS in $\Omega$. In Fig. \ref{fig:spherical_harmonics}, an illustration of the errors encountered by both NExT and LS in a specific 2-degree (9 basis elements) polynomial is presented. Notably, NExT demonstrated superior extrapolation results, with its errors barely visible. In contrast, LS exhibited a noticeable bright red area on top of the sphere, indicating a less favorable outcome. In Fig. \ref{fig:spherical_harmonics_real_function}, the true function is plotted alongside the predicted functions using NExT and LS. LS-predicted function resembles the true function, but clear distinctions can be seen, whereas NExT's predicted function is challenging to distinguish from the true function, as evident in the error figure as well.

\section{Conclusions}

We propose a novel Neural Extrapolation Technique (NExT) framework for extrapolation using a neural network. Motivated by neural networks' ability to approximate, we turn the extrapolation problem into an approximation problem. Specifically, our framework uses learning to bypass the drawback of classical and modern methods of extrapolation, which naturally focus on data available over the learning area $\Omega$ and not on the extrapolation area $\Xi$. Given prior information in the form of known function space, the neural network learns to extrapolate by approximating the learning area; the neural network will learn functions from the function space and produce the projection of any input function onto the learned space.

In this study, we have analyzed the difference between obtaining a solution that minimizes an error function over the extrapolation domain and fitting the data in its original domain and then extrapolating it. We have established a connection between the extrapolation and approximation errors and determined a condition number that relates the two via a bound. This condition number indicates the difficulty level of the extrapolation problem based on the specific settings and under the generally accepted approach that one should fit the extrapolation model to the data domain. In contrast, we designed the NExT framework so it does not extrapolate directly by fitting the data, and thus, it circumvents cases of high values of this condition number, which enables it to achieve better results in challenging settings.

This paper presents two versions of the extrapolation problem, each with two types of data priors. These data priors include known subspace and anchor functions--functions that are expected to be in the vicinity of the target function that we want to extrapolate. These settings enable researchers to use the NExT framework in various scenarios. Furthermore, the general definitions allow for two other challenging aspects of extrapolation. The first is extrapolating in a far domain from where data is collected. In this case, exploiting any given prior is crucial, and that is where the learning solution excels. The second aspect is the ability to easily adjust the extrapolation for general domains, particularly manifolds. We demonstrate this application in the numerical part, indicating the NExT framework's strong applicability. Other numerical illustrations also show the robustness of our approach and its advantages.

\section*{Code Availability}
The code used in this paper is available at:
\href{https://github.com/guyhay94/mainfold_extrapolation}{Code}

\section*{Acknowledgment}
The authors greatly thank Prof. Shai Dekel for many insightful discussions and for sharing his thoughts with us during the development of this research.

NS is partially supported by the NSF-BSF award 2019752 and the DFG award 514588180.
\bibliography{main}

\end{document}